\documentclass[final,12pt]{clear2023} 

\usepackage{booktabs} 
\usepackage{multirow}
\usepackage{wrapfig}

\title[An Algorithm and Complexity Results for Causal Unit Selection]{An Algorithm and Complexity Results for Causal Unit Selection}

\usepackage{times}



\clearauthor{%
 \Name{Haiying Huang} \Email{hhaiying@cs.ucla.edu}\\
 \addr UCLA Computer Science Department
 \AND
 \Name{Adnan Darwiche} \Email{darwiche@cs.ucla.edu}\\
 \addr UCLA Computer Science Department%
}

\usepackage{algorithm}
\usepackage{algpseudocode}

\newcommand\shrink[1]{}

\def\n(#1){\bar{#1}}

\def\pr{{\it Pr}}

\def\C{{\bf C}}

\def\E{{\bf E}}
\def\e{{\bf e}}
\def\Pa{{\bf P}}
\def\pa{{\bf p}}
\def\P{{\bf P}}
\def\p{{\bf p}}

\def\U{{\bf U}}
\def\u{{\bf u}}
\def\V{{\bf V}}
\def\v{{\bf v}}

\def\W{{\bf W}}
\def\w{{\bf w}}
\def\X{{\bf X}}
\def\x{{\bf x}}
\def\Y{{\bf Y}}
\def\y{{\bf y}}
\def\Z{{\bf Z}}
\def\z{{\bf z}}
\def\C{{\bf C}}

\def\S{{\bf S}}

\def\F{\mathcal{F}}

\def\eql(#1,#2){{#1\!\!=\!#2}}
\def\NP{\text{NP}}
\def\PP{\text{PP}}
\def\SP{\#\text{P}}

\DeclareMathOperator*{\dsep}{dsep}
\DeclareMathOperator*{\union}{\bigcup}

\DeclareMathOperator{\map}{MAP}
\DeclareMathOperator{\rmap}{RMAP}
\DeclareMathOperator{\nei}{neigh}

\newcommand\dup[2]{{[#1]}^{#2}}

\def\eql(#1,#2){{#1\!=\!#2}}

\newcommand{\argmax}{\operatornamewithlimits{arg max}}

\newcommand\bracket[1]{\left[{#1}\right]}


\def\clap#1{\hbox to 0pt{\hss#1\hss}}

\newcommand{\Ainput}[1]{\hspace*{\algorithmicindent} \textbf{Input: }#1 }
\newcommand{\Aoutput}[1]{\hspace*{\algorithmicindent} \textbf{Output: }#1 }


\begin{document}

\maketitle

\begin{abstract}%
  The unit selection problem aims to identify
objects, called units, that are most likely to exhibit 
a desired mode of behavior when subjected to stimuli 
(e.g., customers who are about to churn but would 
change their mind if encouraged). Unit selection with
counterfactual objective functions was introduced
relatively recently with existing work focusing
on bounding a specific class of objective functions, called the benefit functions, based on observational and interventional data---assuming
a fully specified model is not available to 
evaluate these functions. We complement this line of work
by proposing the first exact algorithm for finding 
optimal units given a broad class of causal
objective functions and a fully specified  structural causal model (SCM). We show that unit
selection under this class of objective functions is $\NP^\PP$-complete but is $\NP$-complete when
unit variables correspond to all exogenous variables
in the SCM. We also provide treewidth-based 
complexity bounds on our proposed algorithm while 
relating it to a well-known algorithm 
for Maximum a Posteriori (MAP) inference.

\end{abstract}

\begin{keywords}%
  unit selection, structural causal models, counterfactual reasoning%
\end{keywords}

\section{Introduction}
A theory of causality has emerged over the last few decades based on two parallel hierarchies, 
an {\em information hierarchy} and a {\em reasoning hierarchy,} often called the {\em causal hierarchy}~\citep{pearl18,Bareinboim20211OP}. 
On the reasoning side, this theory has crystalized three levels of reasoning with increased sophistication and proximity 
to human reasoning: associational, interventional and counterfactual, which are exemplified by the following canonical probabilities.
{\em Associational} \(\pr(y | x)\): probability of \(y\) given that \(x\) was observed (e.g., probability that a patient has a flu given they have a fever).
{\em Interventional} \(\pr(y_x)\): probability of \(y\) given 
that \(x\) was established by an intervention,
which is different from \(\pr(y | x)\) (e.g.,
seeing the barometer 
fall tells us about the weather but moving the barometer needle won't bring rain).
{\em Counterfactual} \(\pr(y_x | y', x')\): probability of \(y\) if we were to establish
\(x\) given that neither \(x\) nor \(y\) are true (e.g., probability that a patient who
did not take a vaccine and died would have recovered had they been vaccinated).
On the information side, these forms of reasoning 
require different
levels of knowledge, encoded as associational, causal and functional (mechanistic) models, 
with each class of models containing more information than the preceding one. In the framework of probabilistic graphical models
\citep{PGMbook}, such knowledge is encoded 
by Bayesian networks~\citep{Pearl88b,DarwicheBook09}, 
causal Bayesian networks~\citep{pearl00b,PetersBook,SpirtesBook} and functional Bayesian networks~\citep{uai/BalkeP95}
also known as {\em structural causal models} (SCMs).

One utility of this theory has been recently 
crystallized through the
{\em unit selection problem} introduced by \cite{ijcai/LiP19} who motivated it using 
the problem of selecting customers to target by an encouragement offer for renewing a subscription.
Let \(c\) denote the characteristics of a customer, \(x\) denote encouragement and \(y\) denote renewal. One
can use counterfactuals to describe the different types
of customers. 
A responder (\(y_x, y_{x'}'\)) would renew a subscription if encouraged but would not renew otherwise.
An always-taker (\(y_x,y_{x'}\)) would always renew regardless of encouragement.
An always-denier (\(y_x',y_{x'}'\)) would always not renew regardless of encouragement.
A contrarian (\(y_{x'},y_x'\)) would not renew if encouraged but would renew otherwise. 
One can then identify
customers of interest by optimizing an expression, called
a {\em benefit function} in \citep{ijcai/LiP19}, that includes counterfactual 
probabilities. In this example, the benefit function
has the form
\(\beta \pr(\mbox{responder} | c) + \gamma \pr(\mbox{always-taker} | c) + 
\theta \pr(\mbox{always-denier} | c)  + \delta \pr(\mbox{contrarian} | c)\)
where \(\beta, \gamma, \theta, \delta\) are corresponding benefits. In other words, 
one can use this expression to score customers with characteristics \(c\) so
the most promising ones can be selected for an encouragement offer.
When the above benefit function is contrasted with 
classical loss functions (for example, ones used to train neural networks), one sees a fundamental role for 
counterfactual reasoning as it gives us an ability to
distinguish between objects (e.g., people, situations) depending on how they respond to a stimulus.
This distinction sets apart counterfactual reasoning 
(third level of the causal hierarchy)
from the more common, but less refined, associational reasoning (first level). 
It also sets it apart from interventional reasoning 
(second level) which is also not sufficient to make such distinctions.

Existing work on unit selection has focused on a very
practical setting in which only the structure of an SCM
is available together with some observational and 
experimental 
data~\citep{ijcai/LiP19,aaai/LiP22,corr/LiP22a,corr/LiP22b,corr/LiJSP22}. Such data is usually not sufficient to obtain
a fully specified SCM so one cannot obtain point values of the
benefit function. Recent work has therefore focused on
bounding probabilities of causation while tightening these bounds
as much as possible \citep{dawid2017,pearl:etal21-r505},
but with less attention dedicated to optimizing benefits 
based on these bounds; see \citep{corr/LiJSP22,ang:etal22-r519}
for a notable exception.
In this paper, 
we complement this line of work by studying the unit selection
problem from a different and computational direction. 
We are particularly
interested in applying unit selection to structured units 
(e.g., decisions, policies, people, situations, regions, activities) that correspond to instantiations of multiple variables (called unit variables). 
We assume a fully specified SCM so we can obtain 
point values for any {\em causal objective function} 
as discussed in \sectionref{sec:background}. By a causal
objective function we mean any expression involving quantities from any level of the causal hierarchy (observational, interventional and counterfactual). This allows us to 
seek units that satisfy a broad class of conditions.
Examples include: Which combination of activities are most effective to address a particular humanitarian need (human suffering, disease, hunger, privation)? Which regions should be focused on to reduce population movements among refugees? What incentive policy would keep customers engaged for the longest time? 
We then consider a particular but broad class of causal
objective functions in \sectionref{sec:objective-function}
and formally define the computational problem of finding units that
optimize these functions. We dedicate 
\sectionref{sec:complexity of US} to studying the complexity of unit selection in this setting where we show it has
the same complexity as the classical 
{\em Maximum a Posteriori (MAP)} problem.
We then provide an exact algorithm for solving the unit
optimization problem in \sectionref{sec:VE-RMAP} by 
reducing it to a
new problem that we call {\em Reverse-MAP.} We further
characterize the complexity of our proposed algorithm
using the notion of treewidth and provide some analysis
on how its complexity can change depending on the specific
objective function we use. We finally close with
some concluding remarks in \sectionref{sec:conclusion}.
Some proofs of our results are included in the main paper,
the remaining ones can be found in the appendix.

\section{Counterfactual Queries on Structural Causal Models}
\label{sec:background}
We review {\em structural causal models} (SCMs) 
in this section since the unit selection problem is defined on 
these models; see \citep{galles1998axiomatic,halpern2000axiomatizing}
for a comprehensive exposition. 
We use uppercase letters (e.g., $X$) to denote
variables and lowercase letters (e.g., $x$) to denote their states.
We use bold uppercase letters (e.g., $\X$) to denote sets of
variables and bold lowercase letters (e.g., $\x$) to denote their instantiations. 
The states of a binary variable \(X\) are denoted \(x\) and \(x'\).
We also write \(x \in \x\) to mean that variable \(X\)
has state \(x\) in instantiation \(\x\) of variables \(\X\).

An SCM has three components. First, a directed acyclic graph
with its nodes representing variables. Root nodes are called
{\em exogenous} and internal nodes are called {\em endogenous.}
Second, a probability distribution \(\theta(U)\) for each exogenous variable \(U\) in the model. Third, for each endogenous variable
\(V\) with parents \(\P\), the SCM has an equation,
called a {\em structural equation}, which specifies 
a state for \(V\) for each instantiation \(\p\) of its parents \(\P\). 
Let \(\U/\V\) be the exogenous/endogenous variables in an SCM. The distribution \(\pr(\U,\V)\) specified
by the SCM is as follows: \(\pr(\u,\v)=\prod_{u\in\u}\theta(u)\) 
if \(\V=\v\) is implied by \(\U=\u\) and the structural equations;
otherwise, \(\pr(\u,\v) = 0\).

SCMs are a special type of Bayesian networks~\citep{pearlBNbook,DarwicheBook09}
which require a conditional probability table (CPT) for each
node in the network. In particular, for node \(V\) with parents
\(\P\), the CPT specifies the conditional distributions \(\pr(V|\P)\). 
A structural equation can be encoded as a 
CPT which satisfies \(\pr(v|\p) \in \{0,1\}\)
for all \(v\) and \(\p\). Such a CPT is said to be {\em functional} and this is why SCMs are sometimes called {\em functional
Bayesian networks.}

A Bayesian network can only be used to compute {\em observational} probabilities such as \(\pr(y|x)\) which is 
the probability of \(Y=y\) given that we {\em observed} \(X=x\).
An SCM can also be used to compute {\em interventional probabilities}
such as \(\pr(y_x)\) which is the probability of \(Y=y\) after 
{\em setting} \(X=x\). An SCM can further be used to compute 
{\em counterfactual probabilities} such as 
\(\pr(y_x, y'_{x'}| e)\) which is the probability of 
(\(Y=y\) after setting \(X=x\)
and \(Y=y'\) after setting \(X = x'\)) in a situation where 
we observe \(E=e\).\footnote{The class of {\em causal Bayesian
networks} sits between Bayesian networks and functional
Bayesian networks as it can be used to compute observational
and interventional probabilities but not counterfactual 
ones~\citep{pearl2000models}.}
We are particularly interested in this form of counterfactual probabilities as they will be used as
ingredients in our objective functions. We next 
show how to compute such a counterfactual probability on an SCM
by computing an observational probability on an auxiliary 
model. This will be essential for the constructions used later 
in the paper.

Consider the counterfactual probability \(\pr(y_x, y_{x'}' | x,y)\) on the SCM in \figureref{fig:base-triplet}. This
query has three conflicting components: \(y_x\), \(y_{x'}'\) and \((x,y)\).
The first two involve conflicting actions (\(x\) and \(x'\)).
Moreover, the actions and outcomes in the first two components
conflict with the observation in the third component (\(x,y\)). This is why
computing counterfactual probabilities usually requires
an auxiliary model that incorporates multiple worlds (real
and imaginary) that all share the same causal mechanisms
(exogenous variables). For the counterfactual queries we
are interested in, an auxiliary model with three worlds
will suffice as we discuss next.

Given an SCM \(G\), its {\em triplet model} is another SCM
constructed by having three copies \(G^1\), \(G^2\) and \(G^3\)
of \(G\) and then joining them so they share their exogenous variables; see Figure~\ref{fig:plain-triplet}. If \(X\) is a variable in \(G^1\),
we will use \([X]\) to denote its copy in \(G^2\) and
\([[X]]\) to denote its copy in \(G^3\). 
A triplet model is a special case of {\em parallel worlds
models} \citep{ijcai/AvinSP05} which also include {\em twin 
models} \citep{aaai/BalkeP94}.\footnote{Twin models are
sufficient to evaluate counterfactual probabilities like
\(\pr(y_{x'}' | x,y)\) and \(\pr(y_x', y_{x'})\)
but not ones like \(\pr(y_x, y'_{x'} | e)\) which we are
interested in; see also \citep{amai/TianP00,pearl2000models}.}
We can now compute the counterfactual probability 
\(\pr(y_x, y_{x'}' | x,y)\) on SCM \(G\) by operating on
the triplet model as follows. 
First, we mutilate copies \(G^2\) and \(G^3\) in the triplet
model by removing the edges pointing into variables 
$[X]$ and $[[X]]$ and setting $[X] = x$ and $[[X]] = x'$
(since we are intervening on these variables).
The result is a {\em mutilated triplet model} shown in  \figureref{fig:mut-triplet}. We can then evaluate \(\pr(y_x, y_{x'}' | x,y)\) on the SCM \(G\) by computing the
observational probability 
$\pr([y], [[y']] \mid [x], [[x']], x, y)$ on the 
mutilated triplet model. Intuitively, the triplet model
can be viewed as capturing three worlds \(G^1\), \(G^2\)
and \(G^3\). World \(G^1\) captures the observation 
\(x,y\); world \(G^2\) captures the intervention \(X=x\),
and world \(G^3\) captures the intervention \(X=x'\).
This above treatment can be directly generalized to 
counterfactual queries
of the form \(\pr(\y_\x,\w_\v|\e)\) where \(\E,\X,\Y,\V,\W\)
are sets of variables. It is precisely this class of
counterfactual queries that we shall use in the rest of 
the paper, starting with the next section.

\begin{figure}[t]
\floatconts
{fig:triplet}
{\caption{Reducing the counterfactual probability 
\(\pr(y_x, y_{x'}' | x,y)\)
on the model in (a) to an observational probability
$\pr([y], [[y']] \mid [x], [[x']], x, y)$ 
on the model in (c).}}
{
\subfigure[SCM]{\label{fig:base-triplet}
    \includegraphics[width=0.18\textwidth]{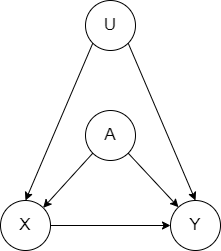}
} \quad
\subfigure[triplet model]{\label{fig:plain-triplet}
    \includegraphics[width=0.35\textwidth]{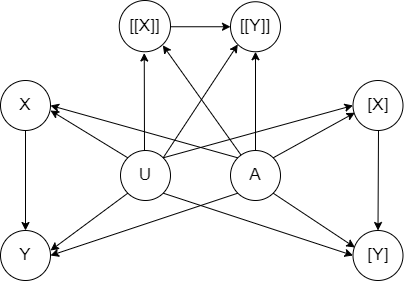}
}\quad
\subfigure[mutilated triplet model]{\label{fig:mut-triplet}
    \includegraphics[width=0.35\textwidth]{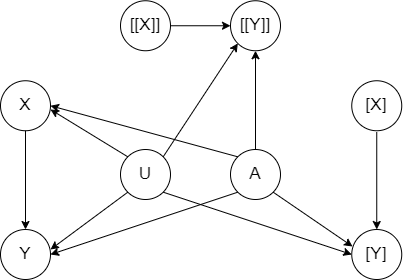}
}
}
\end{figure}

\section{Causal Objective Functions and Unit Selection}
\label{sec:unit-selection}
\label{sec:objective-function}
A causal objective function can be any
expression that involves observational, interventional
or counterfactual probabilities where the goal of unit
selection is to find objects (units)
that optimize this function. However, inspired by~\citep{ijcai/LiP19},
our treatment will be based on a specific class of causal objective functions which is a linear combination of counterfactual probabilities
of the form \(\pr({\y^i}_{\x^i},{\w^i}_{\v^i}|\e^i, \u)\) where \(i=1,\ldots,n\). 
We call \(\U\) the {\em unit variables} since our goal 
is to find instantiations \(\u\) of these 
variables (i.e., units) that optimize the objective function.\footnote{An anonymous reviewer pointed out that the term ``unit'' is often used to designate the unit of analysis; that is, the entity that is characterized by random variables. For example, the unit in many medical studies is the the patient, the unit in many management studies is the company, and the unit in many studies of crime rates is the city or municipality. In this context, ``unit selection'' could be assumed to involve selecting the unit of analysis
which is different from our use of the term in this paper.}
Variables \(\X^i\V^i\) represent treatments, variables
\(\Y^i\W^i\) represent outcomes, 
\((\X^i\cup\V^i) \cap (\Y^i\cup\W^i) = \emptyset\),
and variables \(\E^i\) represent evidence.
Unit variables are shared by all components of the objective
function but each component can have its own treatment,
outcome and evidence variables.

We will further assume that unit variables \(\U\) are exogenous in the SCM (i.e., root variables)
while treatment, outcome and evidence variables
are endogenous. However, not all exogenous variables need to be unit variables. This is consistent with the assumption in
\citep{aaai/LiP22} that unit variables (also called
{\em characteristics}) cannot be 
descendants of treatment or outcome variables.
This leads us to objective functions of the 
following form:\footnote{The conditions we place on
weights \(w_i\) are meant for convenience and they
are not restrictive.}
\begin{equation}
    L(\u) = \sum_{i=1}^n w_i\cdot\pr(\y^i_{\x^i},\w^i_{\v^i}|\e^i, \u) \quad \text{where \ } w_i \geq 0 
    \text{ \ and \ }\sum_{i=1}^n w_i = 1
    \label{eqn:objective-function}
\end{equation}

We can now formally define the unit selection inference problem
on structural causal models.

\begin{definition}[Unit Selection]
\label{def:unit selection}
Given an SCM \(G\), a subset \(\U\) of its variables,
and an objective function \(L(\u)\) such as
\equationref{eqn:objective-function}, the
unit selection inference problem is to compute \(\argmax_\u L(\u)\).
\end{definition}

The benefit function discussed in \citep{ijcai/LiP19} has
the following form:
\begin{equation}
     L(u) = 
     \beta \pr(y_x,  y_{x'}'|u) +  
     \gamma\pr(y_x,  y_{x'}|u) + 
     \theta\pr(y_x', y_{x'}'|u) + 
     \delta\pr(y_x', y_{x'}|u)
     \label{eqn:benefit-ang}
\end{equation}
This class of objective functions falls as a special case of \equationref{eqn:objective-function}
by setting \(n=4\), \(\E^i = \emptyset\),
\(\X^i = \V^i = \{X\}\) and \(\Y^i = \W^i = \{Y\}\) 
for \(i=1,\ldots,4\),
where \(X,Y\) are binary variables. That is, each component \(i\)
of the objective function uses the same single, treatment 
variable \(X\) and the same single, outcome variable \(Y\).
A more general form was 
proposed in~\citep{corr/LiP22a} in which treatment 
\(X\) has values \(x_1,\ldots,x_m\) and
outcome \(Y\) has values \(y_1, \ldots, y_k\) so the
objective function can have up to \(k^m\) components, 
each corresponding to a distinct response
type such as
\(\pr({y_2}_{x_1}, {y_1}_{x_2},{y_1}_{x_3},{y_3}_{x_4}, {y_2}_{x_5}|u)\) when \(k=3\) and \(m=5\).
This class of objective functions is more general
than \equationref{eqn:objective-function} in that
it allows one to express more response types but it assumes one
treatment variable and one outcome variable.
The class of objective functions we consider 
in \equationref{eqn:objective-function} allows
compound treatments and outcomes. It also allows
us to seek units from a particular group.
For example, if \(A\) and \(B\) are two medications 
(binary treatments)
and \(T\) and \(P\) refer to high temperature and high
blood pressure (binary outcomes), 
and \(E\) is the age group with values 
\(e_1, \ldots, e_4\), then the objective function
can include terms such as
\(\pr({(t,p')}_{a,b}\ ,\ {(t',p')}_{a',b}\mid e_3,u)\),
which is the probability that a member of the third
age group would have high temperature and
normal blood pressure if administered both medications and
would have normal temperature and blood pressure if 
administered only the second medication. Moreover, 
since the objective function components can have different treatment and outcome variables, one can select units based 
on their responses to distinct stimuli (e.g., effect of
one type of encouragement on membership renewal 
and the simultaneous effect of another type of
encouragement on increased purchases).\footnote{Going beyond
the form in \equationref{eqn:objective-function}, one can
use causal objective functions with more general ingredients, such as: the
probability of a patient being a responder given they 
are not a contrarian, \(\pr(y_x, {y}_{x'}' | \neg ({y}_x', y_{x'}))\); or the probability that a patient would not have had a stroke if they were on a diet $(y'_d)$ or had exercised $(y_e')$ given that they did neither $(d', e')$, i.e., \(\pr({y}_d' \vee {y}_e' | y, {d}', {e}')\).
Such general quantities have not been treated in the 
literature but some discussions 
have argued for their significance and treated some special cases; e.g.,~\citep{DisjunctiveActions}. 
}

\section{The Complexity of Unit Selection}
\label{sec:complexity of US}

We show next that unit selection is $\NP^\PP$-complete 
for the class of causal objective
functions given in \equationref{eqn:objective-function}.
We also show that this problem is $\NP$-complete when
unit variables correspond to all exogenous variables
in the SCM.\footnote{For a discussion of complexity classes that are relevant to
Bayesian network inference, 
see~\citep{shimony1994MPE} on the MPE decision problem being $\NP$-complete,
and~\citep{park2002map,ParkD04} on the MAP decision problem being $\NP^{\PP}$-complete.
\cite{ai/Roth96} shows that computing node marginals in a Bayesian network 
is $\SP$-complete.
For a textbook discussion of these complexity results, see~\citep[Ch.~11]{DarwicheBook09}.
}
We start 
by providing an efficient reduction from unit selection into a variant of the well-known MAP inference problem, which we call Reverse-MAP. 
We then follow 
by studying the complexity of Reverse-MAP and unit
selection. 

\label{sec:unit-selection-reduction}
Recall that our goal is to find units \(\u\) that maximize
the value \(L(\u)\) of the objective function. The first
step in solving this optimization problem is to be able
to evaluate the objective \(L(\u)\).
We next show a construction which allows us to evaluate
\(L(\u)\) by evaluating a single observational probability involving unit variables $\U$ but on an extended and mutilated model. This construction will serve two purposes. First,
it will permit us to characterize the complexity of unit selection
when using objective functions in the form of \equationref{eqn:objective-function}. 
Second, we will later use the construction to
develop a specific algorithm for solving the unit 
selection problem using these objective functions.

Consider each term $\pr(\y^i_{\x^i}, \w^i_{\v^i}|\e^i, \u)$ in \equationref{eqn:objective-function}. We reviewed in \sectionref{sec:background} how this quantity can be 
reduced to a classical conditional probability 
on a triplet model $G^i$. 
The next step is to encode a linear combination of these conditional probabilities as a conditional probability
on some model \(G'\). This is done using the following construction.

\begin{definition}[Objective Model]
\label{def:objective model}
\label{def:n-network}
Consider an SCM $G$ with parameters \(\theta\) and
the objective function $L$ in \equationref{eqn:objective-function}.
The objective model $G’$ for 
$\langle G,L\rangle$ has parameters \(\theta'\) 
and constructed as follows:
\begin{enumerate}
\item Construct a triplet model $G^i$ of $G$ for each term $\pr(\y^i_{\x^i}, \w^i_{\v^i}|\e^i, \u)$ in $L$ (see \sectionref{sec:background}). Join $G^1, \ldots, G^n$ so that their unit variables $\U$ are shared. This leads to model \(G'\).

\item Add a node $H$ to \(G'\) as a parent of all outcome nodes $\Z = \{[\Y^i], [[\W^i]]\}^n_{i=1}$. Node $H$ has states \(h_1,\ldots,h_n\) and prior $\theta'(h_i) = w_i$. Each node $Z \in \Z$ now has parents $\Pa_Z \cup \{H\}$, where $\Pa_Z$ are the parents of 
$Z$ in \(G'\) before node $H$ is added. Let $z^i$ be the state of $Z$ in the corresponding instantiation $\y^i\w^i$ of objective function \(L\). The new CPT for $Z$ is: 
\bigskip
    \begin{center}
    \vspace{-5mm}
    \small
    \begin{tabular}{c|c|c|c}
        $\Pa_Z$ & $H$ & $Z$ & $\theta'(Z|\Pa_Z, H)$ \\
        \hline
        $\pa$ & $h_i$ & $z^i$ & $\theta(z^i|\pa)$   \\
        $\pa$ & $h_i$ & $\bar{z}^i$ & $\theta(\bar{z}^i|\pa)$   \\
        $\pa$ & $\bar{h}_i$ & $z^i$ & $1.0$  \\
        $\pa$ & $\bar{h}_i$ & $\bar{z}^i$ & $0.0$  \\
    \end{tabular}
    \end{center}
Here, \(\bar{z}^i, \bar{h}_i\) denote any states of variables
\(Z,H\) that are distinct from states \(z^i,h_i\).

\end{enumerate}
\end{definition}

We say the objective model 
\(G'\) has {\em \(n\) components,}
and call \(H\) the 
{\em mixture variable} as it encodes a mixture 
of the objective function terms. 
The CPTs for variables \([\Y^i], [[\W^i]]\) in model \(G'\)
reduce to their original CPTs in SCM \(G\)
when $H = h_i$, and imply $[\Y^i]=\y^i, [[\W^i]]=\w^i$ when $H \neq h_i.$
The objective $L(\u)$ in SCM 
$G$ is a classical probability in the objective model $G'$ (proof
in \appendixref{app:reduction proof}).

\begin{theorem}
\label{thm:unit-selection-reduction}
Consider an SCM $G$ with unit variables $\U$. Let $L$ be the
objective function in \equationref{eqn:objective-function},
and let $G'$ be an objective model for $\langle G,L\rangle$. Let $\X=\{[\X^i]\}_{i=1}^n$,$\Y=\{[\Y^i]\}_{i=1}^n$, $\W=\{[[\W^i]]\}_{i=1}^n$, $\V=\{[[\V^i]]\}_{i=1}^n$ and $\E=\{\E^i\}_{i=1}^n$. We have $L(\u) = \pr'(\y, \w|\x, \v, \e, \u),$ 
where $\y,\w,\x,\v,\e$ are the instantiations of variables \(\Y,\W,\X,\V,\E\) in objective function \(L\).
\end{theorem}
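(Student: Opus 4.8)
The plan is to expand the target conditional probability $\pr'(\y, \w \mid \x, \v, \e, \u)$ over the states of the mixture variable $H$ and to show that the term for $H = h_i$ contributes exactly $w_i \cdot \pr(\y^i_{\x^i}, \w^i_{\v^i} \mid \e^i, \u)$, so that summing recovers $L(\u)$ from \equationref{eqn:objective-function}. First I would write, by the law of total probability,
\[
\pr'(\y, \w \mid \x, \v, \e, \u) = \sum_{i=1}^n \pr'(h_i \mid \x, \v, \e, \u)\cdot \pr'(\y, \w \mid h_i, \x, \v, \e, \u),
\]
and then evaluate the two factors separately.

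For the first factor I would argue, via d-separation in $G'$, that $H$ is independent of the conditioning set $\{\X, \V, \E, \U\}$. Since $H$ is a root whose only children are the outcome nodes $\Z$, every path leaving $H$ begins $H \to Z$ for some $Z \in \Z$; such a path is blocked because $Z$ is either a collider that is not in the conditioning set (we condition on treatments, evidence and units, never on outcomes) and has no descendant there, or else it is a non-collider leading only to descendants of $Z$, none of which lie in $\{\X, \V, \E, \U\}$ (units and mutilated treatments are roots, and evidence sits in the first copy, which reaches the outcome nodes only through shared exogenous roots). Hence $\pr'(h_i \mid \x, \v, \e, \u) = \theta'(h_i) = w_i$.

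For the second factor I would first note that conditioning on both $\U$ and $H = h_i$ decouples the $n$ triplet sub-models: across components the only connections are through $\U$ and through $H$, and both are in the conditioning set, with $H$ acting as a common-cause fork (never a collider) on cross-triplet paths. This lets me factor $\pr'(\y, \w \mid h_i, \x, \v, \e, \u)$ into a product over components $j$ of $\pr'([\y^j], [[\w^j]] \mid h_i, [\x^j], [[\v^j]], \e^j, \u)$. For $j \neq i$, the state $H = h_i$ is a non-matching state $\bar{h}_j$ from the viewpoint of each outcome node of triplet $j$, so by the CPT of \definitionref{def:objective model} that node is forced into its desired state $z^j$ with probability $1$, making the factor equal to $1$. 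For $j = i$, the same CPTs reduce to the original parameters $\theta(z \mid \pa)$, so triplet $i$ behaves exactly like the stand-alone (mutilated) triplet model; invoking the reduction reviewed in \sectionref{sec:background}, this factor equals $\pr(\y^i_{\x^i}, \w^i_{\v^i} \mid \e^i, \u)$. Multiplying, the second factor is $\pr(\y^i_{\x^i}, \w^i_{\v^i} \mid \e^i, \u)$, and substituting both factors back yields $\sum_i w_i\,\pr(\y^i_{\x^i}, \w^i_{\v^i} \mid \e^i, \u) = L(\u)$.

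The main obstacle is the pair of conditional-independence claims: rigorously establishing that $H$ is independent of $\{\X, \V, \E, \U\}$ (so the mixture weight survives the conditioning untouched) and that conditioning on $H = h_i$ together with $\u$ renders the triplets mutually independent (so the joint factorizes). Both hinge on a careful d-separation analysis of $G'$ that exploits exactly which variables are roots, which have been mutilated, and the fact that the outcome nodes---the sole children of $H$---are never in the conditioning set, keeping every collider through $H$ inactive. Once these independences are in place, together with a routine check that the conditioning events have positive probability, the clamping-versus-reduction behaviour of the CPTs is a direct table lookup.
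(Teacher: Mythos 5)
Your proposal is correct and follows essentially the same route as the paper's proof: expand over the states of the mixture variable $H$, use d-separation to reduce $\pr'(h_i \mid \x,\v,\e,\u)$ to $w_i$, factor the remaining conditional across the $n$ triplet components (the paper packages this decoupling step as a standalone lemma about sets that decompose the DAG), and then read off from the CPT construction that the $j\neq i$ factors equal $1$ while the $j=i$ factor reduces to the triplet-model evaluation of $\pr(\y^i_{\x^i},\w^i_{\v^i}\mid\e^i,\u)$. The only cosmetic difference is that you argue the cross-component independence directly by d-separation, whereas the paper isolates it as Lemma~\ref{lem:disconnect}.
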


\begin{figure}[t]
\centering
{\includegraphics[width=0.90\textwidth]{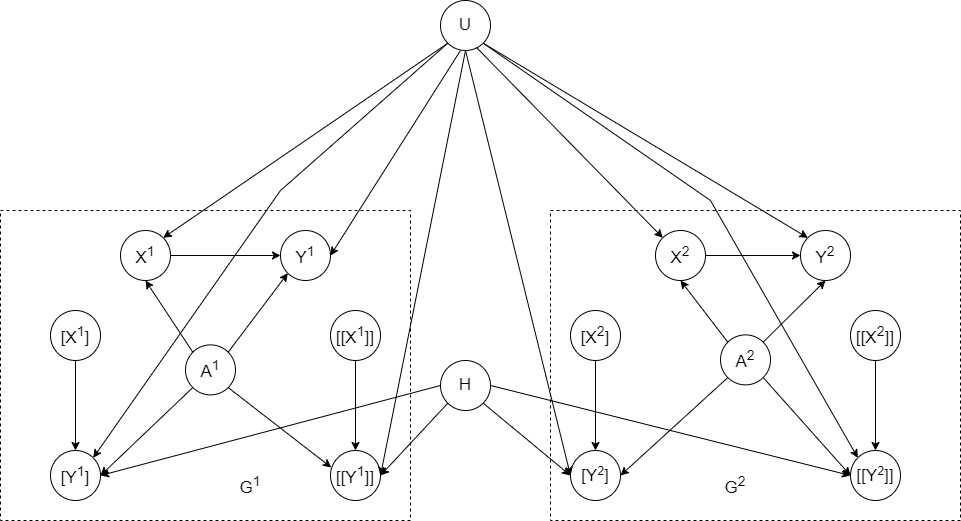}}
{\caption{An objective model with two components for the SCM in \figureref{fig:base-triplet}.
\label{fig:objective model}}}
\end{figure}

Consider the SCM in \figureref{fig:base-triplet}
and the causal objective function 
$L(u) = w_1 \cdot \pr(y_x, y'_{x'}|u) + w_2\cdot\pr(y_x, y_{x'}|u)$. 
Figure~2
shows a corresponding
objective model \(G'\) constructed according to
Definition~\ref{def:objective model}. We now have
$L(u)=\pr'([y^1], [[{y'}^1]], [y^2], [[y^2]] \mid [x^1], [[{x'}^1]], [x^2], [[{x'}^2]],u).$

Theorem \ref{thm:unit-selection-reduction} suggests that we can optimize the objective function $L(\u)$ on an SCM \(G\) by 
computing the instantiation $\argmax_\u \pr(\y,\w|\x,\v,\e,\u)$ 
on an objective model $G'$. 
The is similar to the classical MAP problem on model $G'$, 
except that 
the optimized variables $\U$ appear after the conditioning 
operator instead of before it. 
This leads to our definition of 
the Reverse-MAP problem.

\begin{definition}[Reverse-MAP]
\label{def:reverse-MAP}
Consider an SCM \(G\) with distribution \(\pr\) and
suppose \(\U,\E_1,\E_2\) are disjoint sets of variables in \(G\).
The Reverse-MAP instantiation for variables $\U$ and instantiations \(\e_1,\e_2\) is defined as follows:
\(
\rmap(\U, \e_1, \e_2) \triangleq \argmax_{\u} \pr(\e_1 \mid \u, \e_2).\)
\end{definition}

To see the connection between Reverse-MAP and MAP,
note that 
$\argmax_{\u}\pr(\e_1|\u, \e_2) = \argmax_{\u}\pr(\u, \e_1, \e_2) / \pr(\u, \e_2)$ where 
\(\argmax_{\u}\pr(\u, \e_1, \e_2)
= \argmax_{\u}\pr(\u | \e_1, \e_2)\) is the known
MAP problem~\citep{pearlBNbook}.
In general, the MAP instantiation $ \argmax_{\u}\pr(\u,\e_1, \e_2)$ is not the Reverse-MAP instantiation since $\pr(\u, \e_2)$ also depends on $\U$; see 
\appendixref{app:example-map-rmap} for a concrete example
that illustrates this point.
We now have the following result,
proven in \appendixref{app:poly reductions proof}.

\begin{corollary}
\label{cor:unit-selection-rmap}
There are polynomial-time reductions between 
the Reverse-MAP problem and the unit selection problem 
with objective functions in the form of \equationref{eqn:objective-function}.
\end{corollary}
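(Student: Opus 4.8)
The plan is to prove the two polynomial-time reductions separately, since the corollary asserts interreducibility. \emph{Unit selection reduces to Reverse-MAP.} This direction is essentially delivered by Theorem~\ref{thm:unit-selection-reduction}. Given a unit selection instance $\langle G,\U,L\rangle$ with $L$ as in \equationref{eqn:objective-function}, I would construct the objective model $G'$ exactly as in Definition~\ref{def:objective model}. This construction is polynomial in the sizes of $G$ and $L$: it glues $3n$ copies of $G$ along the shared unit variables $\U$ and adds a single mixture node $H$ whose CPT has a constant number of rows per outcome node. Theorem~\ref{thm:unit-selection-reduction} then gives $L(\u)=\pr'(\y,\w\mid\x,\v,\e,\u)$, so that $\argmax_\u L(\u)=\rmap(\U,\ \y\w,\ \x\v\e)$ on $G'$; that is, unit selection is the Reverse-MAP instance with $\e_1=\y\w$ and $\e_2=\x\v\e$, and any optimizer transfers verbatim.

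\emph{Reverse-MAP reduces to unit selection.} Starting from a Reverse-MAP instance $\langle G,\U,\e_1,\e_2\rangle$ (Definition~\ref{def:reverse-MAP}), I would collapse the counterfactual objective of \equationref{eqn:objective-function} to a plain observational conditional by using a single component, $n=1$ and $w_1=1$, with empty treatment sets $\X^1=\V^1=\emptyset$. With no interventions, the term $\pr(\y^1_{\x^1},\w^1_{\v^1}\mid\e^1,\u)$ reduces to $\pr(\e_1\mid\e_2,\u)$ once the outcome variables and their instantiation are taken to be $\e_1$ and the evidence $\e^1$ is taken to be $\e_2$. Hence $\argmax_\u L(\u)=\rmap(\U,\e_1,\e_2)$.

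The one genuine obstacle is the structural mismatch between the two problems: unit selection requires $\U$ to be exogenous (and treatment, outcome and evidence variables endogenous), whereas Reverse-MAP places no such restriction on $\U,\E_1,\E_2$. I would bridge this with a deterministic gadget. For each $U\in\U$ that is not already a root, add a fresh exogenous variable $\hat U$ with the same state space---this becomes the actual unit variable---together with a deterministic child $O_U$ satisfying $O_U=\true$ iff $U=\hat U$, and fold $\mathbf{O}=\true$ into the evidence $\e_2$. Since each $\hat U$ is an independent root and each $O_U$ is functional, the joint factorizes so that conditioning on $\hat{\U}=\u$ together with $\mathbf{O}=\true$ pins $\U=\u$ while leaving $G$'s original mechanisms intact; the key identity to verify is $\pr_{G^\star}(\e_1\mid\hat{\U}=\u,\mathbf{O}=\true,\e_2)=\pr_G(\e_1\mid\u,\e_2)$, where $G^\star$ is the gadgeted model. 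The same copy-and-observe trick turns any exogenous outcome or evidence variable into an endogenous one, so all of the structural assumptions can be met. The hard part is precisely checking this gadget identity---including that the positivity condition $\pr_G(\u,\e_2)>0$ under which the $\argmax$ is well defined is matched by $\pr_{G^\star}(\hat{\U}=\u,\mathbf{O}=\true,\e_2)>0$---after which the remaining claims (polynomial size of both constructions and the transfer of optimizers) are routine bookkeeping.
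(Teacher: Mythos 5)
Your two reductions are exactly the paper's: unit selection goes to Reverse-MAP via the objective model of Definition~\ref{def:objective model} and Theorem~\ref{thm:unit-selection-reduction} (with the $O(n\cdot|G|)$ size bound), and Reverse-MAP goes to unit selection by taking a single component with $n=1$, $w_1=1$, empty treatment sets, outcome $\E_1$ and evidence $\E_2$. The one place you go beyond the paper is the copy-and-observe gadget for the case where the Reverse-MAP targets $\U$ (or the variables in $\E_1,\E_2$) violate the structural assumptions of \sectionref{sec:objective-function} ($\U$ exogenous, outcomes and evidence endogenous); the paper's proof silently assumes the Reverse-MAP instance already conforms, so your patch closes a real, if minor, gap. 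The gadget is sound: in $G^\star$ the joint factorizes as $\theta(\hat\u)\cdot\pr_G(\cdot)$ once $\mathbf{O}=\true$ forces $\U=\hat\U$, so the prior $\theta(\hat\u)$ cancels in the conditional $\pr_{G^\star}(\e_1\mid\hat\u,\mathbf{O}=\true,\e_2)=\pr_G(\e_1\mid\u,\e_2)$ provided you give $\hat U$ a strictly positive (say uniform) prior, and positivity of $\pr_G(\u,\e_2)$ transfers for the same reason.
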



\label{sec:rmap-complexity}
We next characterize the complexity of Reverse-MAP under
different conditions. 
Consider a decision version of the problem, D-Reverse-MAP,
defined as follows. 

\begin{definition}[D-Reverse-MAP]
\label{def:D-Reverse-MAP}
Given an SCM with rational parameters that induces distribution \(\pr\), 
some target variables $\U$, 
some evidence $\e_1, \e_2$ and a rational threshold $p$,
the D-Reverse-MAP problems asks whether there is
an instantiation $\u$ of $\U$ such that $\pr(\e_1|\u, \e_2) > p$.
\end{definition}

The next theorem shows that 
D-Reverse-MAP is $\NP^\PP$-complete, 
like classical MAP \citep{jair/ParkD04}.
Its proof can be found in 
\appendixref{app:d-reverse-map proof NP-PP}.

\begin{theorem}
\label{thm:D-Reverse-MAP complexity}
D-Reverse-MAP is $\NP^\PP$-complete.
\end{theorem}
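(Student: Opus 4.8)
The plan is to prove $\NP^\PP$-completeness by establishing membership and hardness separately, leaning on the structural similarity to classical MAP that the paper has already emphasized. For membership in $\NP^\PP$, I would observe that a nondeterministic machine can guess an instantiation $\u$ of the target variables $\U$ and then verify the threshold condition $\pr(\e_1 \mid \u, \e_2) > p$ using a $\PP$ oracle. The subtlety relative to ordinary MAP is that the quantity being tested is a conditional probability, not a joint one. I would handle this by writing $\pr(\e_1 \mid \u, \e_2) = \pr(\u, \e_1, \e_2) / \pr(\u, \e_2)$, so that the test $\pr(\u,\e_1,\e_2) > p \cdot \pr(\u,\e_2)$ is an inequality between two quantities each computable by counting satisfying exogenous instantiations weighted by rational parameters. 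Since comparing such weighted counts reduces to a single $\PP$ query (both sides being evaluated by a probabilistic polynomial-time machine on the guessed $\u$), the whole procedure is $\NP$ with a $\PP$ oracle, giving membership.

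For hardness, I would reduce from classical D-MAP, which is known to be $\NP^\PP$-complete \citep{jair/ParkD04}. Given a D-MAP instance asking whether some $\u$ satisfies $\pr(\u, \e) > p$ on a Bayesian network, the goal is to build a Reverse-MAP instance whose answer coincides. The cleanest route is to construct an auxiliary model in which the denominator $\pr(\u, \e_2)$ is forced to be constant (independent of $\u$), so that maximizing the conditional $\pr(\e_1 \mid \u, \e_2)$ becomes equivalent to maximizing the joint probability of the original MAP problem. I would achieve this by introducing fresh gadget variables that decouple the target variables $\U$ from the conditioning evidence $\e_2$, typically by giving $\U$ a uniform prior and routing the original network's evidence entirely into $\e_1$ while using a trivial or structurally disconnected $\e_2$; one can also normalize via the corollary already established (\corollaryref{cor:unit-selection-rmap}) by passing through the unit selection problem, since polynomial-time reductions in both directions are in hand.

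The main obstacle I anticipate is the hardness direction, specifically controlling the conditional's denominator. In plain MAP the objective is a joint probability, whereas Reverse-MAP optimizes a ratio, and the example promised in \appendixref{app:example-map-rmap} warns that the MAP and Reverse-MAP maximizers genuinely differ. So a naive reduction that simply relabels variables will fail: I must ensure that the $\u$-dependence of $\pr(\u, \e_2)$ either vanishes or is compensated. The safest design makes $\pr(\u, \e_2)$ equal to $\pr(\u)$ times a constant by choosing $\e_2$ to be evidence whose probability is independent of $\u$ (for instance, evidence on variables that are non-descendants and non-ancestors of $\U$, or an empty $\e_2$ combined with a uniform prior on $\U$). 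Under such a gadget, $\pr(\e_1 \mid \u, \e_2) \propto \pr(\u, \e_1, \e_2)$ up to a $\u$-independent factor, so the Reverse-MAP maximizer matches the MAP maximizer and the threshold can be rescaled accordingly. Verifying that this factor is indeed constant and that the threshold $p$ transforms into a rational of polynomial size is the delicate bookkeeping step, but it is routine once the gadget is fixed.

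Alternatively, and perhaps more economically, I would invoke \corollaryref{cor:unit-selection-rmap} directly: since there are polynomial-time reductions between Reverse-MAP and unit selection, it suffices to argue hardness for whichever of the two is more convenient, and membership for the other, so that $\NP^\PP$-completeness transfers across the reduction. This lets me reuse the objective-model construction of \definitionref{def:objective model} rather than building a fresh gadget from scratch, which I expect to shorten the hardness argument considerably.
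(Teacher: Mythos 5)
Your membership argument matches the paper's: guess $\u$ and test the conditional threshold with a $\PP$ oracle (the paper phrases this test as a D-MAR query). For hardness, your primary plan is essentially the paper's route entered one step earlier: the paper reduces directly from E-MAJSAT using the Park--Darwiche circuit gadget, in which all roots $\U \cup \V$ carry uniform priors and $\e_2 = \emptyset$, so that $\pr(\u)$ is constant over $\u$ and $\pr(s_\alpha \mid \u)$ equals the fraction of $\v$ with $\u\v \models \alpha$. That is precisely the ``force the denominator to be constant'' device you describe. One caution on your framing, though: if you reduce from \emph{arbitrary} D-MAP instances, you cannot simply ``give $\U$ a uniform prior'' without changing the instance, since $\pr(\u)$ --- and hence the MAP maximizer and the threshold --- depends on those priors. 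The clean repair is to observe that the $\NP^\PP$-hard D-MAP instances produced by the E-MAJSAT reduction already have uniform priors on $\U$ and need no conditioning evidence, so hardness on that restricted family suffices; this is in effect what the paper does by starting from E-MAJSAT rather than from D-MAP in full generality. The remaining bookkeeping (rescaling the threshold by the constant $\pr(\u)$) is routine, as you anticipate.

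Your ``more economical'' alternative --- transferring hardness across \corollaryref{cor:unit-selection-rmap} from unit selection --- is circular in this paper's architecture: the $\NP^\PP$-hardness of unit selection is itself \emph{derived} from \theoremref{thm:D-Reverse-MAP complexity} via that corollary, so there is no independently established hardness result for unit selection to import. A direct hardness proof for one of the two problems is unavoidable, and the E-MAJSAT gadget is the natural choice. With the alternative route dropped and the uniform-prior point made explicit, your argument coincides with the paper's.
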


We can now characterize the complexity of unit selection using
\theoremref{thm:D-Reverse-MAP complexity} and \corollaryref{cor:unit-selection-rmap}.

\begin{corollary}
Unit selection is $\NP^\PP$-complete assuming the objective
function in \equationref{eqn:objective-function}.
\end{corollary}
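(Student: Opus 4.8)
The plan is to derive the corollary from the two facts already in hand: the polynomial inter-reducibility of unit selection and Reverse-MAP from \corollaryref{cor:unit-selection-rmap}, and the $\NP^\PP$-completeness of D-Reverse-MAP from \theoremref{thm:D-Reverse-MAP complexity}. First I would fix a decision version, call it D-Unit-Selection: given an SCM $G$, unit variables $\U$, an objective function $L$ of the form \equationref{eqn:objective-function}, and a rational threshold $p$, decide whether there is a unit $\u$ with $L(\u) > p$. Completeness is then established by handling membership and hardness separately, in each case transporting the corresponding direction of \corollaryref{cor:unit-selection-rmap} from the $\argmax$ formulation to the thresholded decision formulation.

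For membership I would appeal directly to \theoremref{thm:unit-selection-reduction}. Given a D-Unit-Selection instance, construct the objective model $G'$ for $\langle G, L\rangle$; this takes polynomial time since $G'$ consists of $n$ triplet copies of $G$ joined at $\U$ together with the single mixture node $H$. By \theoremref{thm:unit-selection-reduction}, $L(\u) = \pr'(\y, \w \mid \x, \v, \e, \u)$, so the question ``is there $\u$ with $L(\u) > p$?'' is literally the D-Reverse-MAP question on $G'$ with $\e_1 = \y\w$ and $\e_2 = \x\v\e$. Hence D-Unit-Selection $\le_p$ D-Reverse-MAP, and since the latter lies in $\NP^\PP$ by \theoremref{thm:D-Reverse-MAP complexity}, so does the former. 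Intuitively, a machine guesses $\u$ nondeterministically and then resolves the single conditional-probability threshold $\pr'(\y,\w\mid\x,\v,\e,\u) > p$ with one $\PP$ oracle call, the conditional probability being a ratio of two model marginals whose comparison to $p$ a single $\PP$ query decides.

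For hardness I would invoke the reverse direction of \corollaryref{cor:unit-selection-rmap}, which maps an arbitrary Reverse-MAP query $\pr(\e_1 \mid \u, \e_2)$ to a unit selection instance whose objective equals that query. Because this map is value-preserving, it carries the threshold $p$ unchanged and therefore reduces D-Reverse-MAP to D-Unit-Selection in polynomial time; combined with the $\NP^\PP$-hardness half of \theoremref{thm:D-Reverse-MAP complexity}, this makes D-Unit-Selection $\NP^\PP$-hard. Together with membership, D-Unit-Selection is $\NP^\PP$-complete.

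I expect the main obstacle to sit in the hardness direction, for a structural rather than a computational reason: unit selection is a \emph{restricted} problem---its objective must be a nonnegative, normalized linear combination of counterfactuals $\pr(\y^i_{\x^i}, \w^i_{\v^i}\mid\e^i,\u)$, and its unit variables must be exogenous---so one must check that even this restricted form is rich enough to encode a generic conditional-probability query $\pr(\e_1 \mid \u, \e_2)$. The care goes into realizing an arbitrary such query as a single-term ($n = 1$) objective of the legal form while respecting the exogeneity constraint on $\U$; once \corollaryref{cor:unit-selection-rmap} guarantees this encoding, the remaining step---verifying that both reductions preserve the objective value and hence lift from the search problems to the decision problems---is routine.
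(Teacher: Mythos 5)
Your proposal is correct and follows essentially the same route as the paper, which derives this corollary directly by combining the polynomial inter-reductions of \corollaryref{cor:unit-selection-rmap} with the $\NP^\PP$-completeness of D-Reverse-MAP from \theoremref{thm:D-Reverse-MAP complexity}; you merely make explicit the decision-version bookkeeping that the paper leaves implicit. Your closing observation about where the care lies is also apt: the hardness instances produced in the proof of \theoremref{thm:D-Reverse-MAP complexity} have exogenous target variables and an endogenous leaf $S_\alpha$, so the single-term ($n=1$) encoding respects the restrictions on \equationref{eqn:objective-function}.
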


In an SCM, exogenous (root) variables represent all uncertainties in the model and the endogenous (internal) variables are uniquely determined by exogenous variables. This property of SCMs
significantly reduces the complexity of unit selection
when the unit variables correspond to all SCM exogenous variables.
This is implied by the following result which is proven
in \appendixref{app:d-reverse-map proof NP}.

\begin{theorem}
\label{thm:d-reverse-MAP NP}
D-Reverse-MAP is $\NP$-complete if its target variables 
are all the SCM root variables.
\end{theorem}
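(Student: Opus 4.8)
The plan is to establish membership in $\NP$ and $\NP$-hardness separately, with the crux being the observation that fixing \emph{all} exogenous variables turns the conditional $\pr(\e_1 \mid \u, \e_2)$ into a $0/1$ quantity, so that the counting ($\PP$) component responsible for the general problem being $\NP^\PP$-complete (\theoremref{thm:D-Reverse-MAP complexity}) collapses entirely, leaving only the existential search over $\u$.

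For membership, I would take the witness to be an instantiation $\u$ of the target variables $\U$. Since $\U$ comprises all exogenous variables of the SCM, the structural equations determine a unique instantiation $\v$ of every endogenous variable, computable in polynomial time by evaluating the equations in topological order. Consequently $\pr(\u,\e_1,\e_2)$ equals $\pr(\u)=\prod_{u\in\u}\theta(u)$ when $\v$ is consistent with $\e_1 \cup \e_2$ and equals $0$ otherwise, and analogously for $\pr(\u,\e_2)$. Hence, whenever it is defined (i.e.\ $\pr(\u,\e_2) > 0$), the ratio $\pr(\e_1\mid\u,\e_2)$ is either $0$ or $1$, and the test $\pr(\e_1\mid\u,\e_2) > p$ reduces to consistency checking plus rational arithmetic, all in polynomial time. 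This places the root-target restriction of D-Reverse-MAP in $\NP$.

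For hardness, I would reduce from $3$-SAT. Given a formula $\phi$ over Boolean variables $x_1,\dots,x_m$, I would introduce one binary exogenous variable $U_i$ per $x_i$ with the positive prior $\theta(u_i)=\theta(u_i')=1/2$, together with a collection of endogenous variables carrying functional CPTs that compute the truth value of each clause and then their conjunction, terminating in a single endogenous variable $T$ whose value is $\true$ exactly when the assignment encoded by $\u$ satisfies $\phi$. I would then set the target $\U=\{U_1,\dots,U_m\}$ (all and only the roots), take $\e_1 = (T{=}\true)$ and $\e_2=\emptyset$, and fix the threshold $p=1/2$. Because every full assignment has $\pr(\u)=2^{-m}>0$, the conditional is always defined and $\pr(\e_1\mid\u)$ equals $1$ if $\u\models\phi$ and $0$ otherwise; therefore an instantiation with $\pr(\e_1\mid\u) > 1/2$ exists if and only if $\phi$ is satisfiable, giving an $\NP$-hardness reduction.

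The step I expect to require the most care is the membership argument, specifically the well-definedness of the conditional when $\theta$ has zero entries: the verifier must accept only witnesses $\u$ with $\pr(\u,\e_2) > 0$, and I would argue this restriction does not affect correctness of the decision. A secondary point is confirming that the functional CPTs in the reduction remain polynomial in size---for instance by decomposing each clause and the final conjunction into a bounded-fan-in tree of deterministic gates---so that the constructed SCM, and hence the reduction, is genuinely polynomial-time.
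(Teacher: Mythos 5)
Your proposal is correct and follows essentially the same route as the paper: membership via evaluating the structural equations to determine the endogenous variables from the full root instantiation (so the conditional is $0/1$ and verifiable in polynomial time), and hardness via a deterministic-gate SCM that simulates a Boolean formula with all roots as targets, $\e_1$ asserting the output node is true and $\e_2=\emptyset$. The only cosmetic differences are that the paper reduces from SAT (rather than 3-SAT) using the inductive gate construction of Park and Darwiche and sets the threshold to $p=0$ rather than $p=1/2$; both choices are interchangeable here.
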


\begin{corollary}
Unit selection is $\NP$-complete when the unit variables
are all the SCM exogenous (root) variables, assuming the
objective functions in \equationref{eqn:objective-function}.
\end{corollary}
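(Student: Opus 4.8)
The plan is to prove both membership in $\NP$ and $\NP$-hardness, using \theoremref{thm:d-reverse-MAP NP} and the inter-reducibility of \corollaryref{cor:unit-selection-rmap}, while verifying that the restriction ``unit variables are all exogenous / target variables are all roots'' is preserved by the reductions. I would treat the two directions separately rather than quoting \corollaryref{cor:unit-selection-rmap} as a black box, since that corollary is stated for the general case and its objective-model side introduces an extra root (the mixture variable $H$) that must be accounted for.

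For $\NP$-hardness I would reduce from D-Reverse-MAP restricted to target variables being all SCM roots, which is $\NP$-hard by \theoremref{thm:d-reverse-MAP NP}. Given such an instance $\langle G, \U, \e_1, \e_2, p\rangle$ with $\U$ equal to all roots of $G$, I would build a unit selection instance over the same $G$ with unit variables $\U$ and the single-term objective from \equationref{eqn:objective-function} obtained by taking $n=1$, $w_1=1$, empty treatment sets $\X^1=\V^1=\emptyset$, outcome variables $\Y^1 = \vars(\e_1)$ with $\y^1 = \e_1$, empty $\W^1$, and evidence $\E^1 = \vars(\e_2)$ with $\e^1 = \e_2$. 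Since $\e_1,\e_2$ are disjoint from the roots $\U$, their variables are endogenous, so this is a legal objective and $\U$ is exactly the set of all exogenous variables. With empty interventions the counterfactual term degenerates to an observation, giving $L(\u) = \pr(\e_1 \mid \e_2, \u)$ directly (no objective-model construction is needed). Hence $L(\u) > p$ iff $\pr(\e_1 \mid \u, \e_2) > p$, the two decision problems coincide, and the reduction is polynomial.

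For membership in $\NP$ the certificate is an instantiation $\u$ of the unit variables, and the verifier must check $L(\u) > p$ in polynomial time. The key observation is that when $\U$ comprises all exogenous variables, fixing $\u$ determines, via the structural equations, the value of every endogenous variable in the factual world and in each intervened world of every triplet model. Consequently each term $\pr(\y^i_{\x^i}, \w^i_{\v^i}\mid \e^i, \u)$ is either $0$ or $1$---it equals $1$ exactly when the determined factual world satisfies $\e^i$ while the worlds intervened by $\x^i$ and $\v^i$ satisfy $\y^i$ and $\w^i$---and is obtained by straightforward forward evaluation. Summing $\sum_i w_i$ over the terms evaluating to $1$ yields $L(\u)$ in polynomial time; one admits only those $\u$ for which each conditioning event $\e^i$ has positive probability (also decidable in polynomial time) so that all terms are well-defined.

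The main obstacle I anticipate is reconciling this membership argument with the route through D-Reverse-MAP via the objective model, because that model $G'$ adds the mixture variable $H$ as a root beyond $\U$, so the target set $\U$ is not literally all roots of $G'$ and \theoremref{thm:d-reverse-MAP NP} does not apply verbatim. I would resolve this in either of two consistent ways: argue membership directly as above, bypassing $G'$ entirely; or observe that $H$ ranges over only $n$ states, so that after fixing $\u$ the sole residual uncertainty is a single variable with polynomially many values, whence summing $\pr'(\e_1, \e_2, \u, h)$ and $\pr'(\e_2, \u, h)$ over $h$ keeps the evaluation polynomial and preserves the $\NP$ upper bound. I would favor the direct argument for cleanliness and use the $H$-handling remark only to confirm consistency with the Reverse-MAP formulation.
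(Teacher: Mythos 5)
Your proposal is correct and follows the route the paper intends for this corollary, namely combining \theoremref{thm:d-reverse-MAP NP} with the reductions of \corollaryref{cor:unit-selection-rmap}; the hardness direction in particular is exactly the paper's reduction (take $n=1$, empty interventions, $\Y^1=\E_1$, $\E^1=\E_2$ over the unchanged SCM, which preserves the all-roots property). Where you genuinely add something is on the membership side: you are right that \corollaryref{cor:unit-selection-rmap} and \theoremref{thm:d-reverse-MAP NP} cannot be cited as black boxes there, because the reduction to Reverse-MAP passes through the objective model $G'$, whose roots are $\U\cup\{H\}$, so the target set is no longer \emph{all} roots and the theorem does not apply verbatim. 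The paper leaves this step implicit. Both of your repairs are sound: the direct argument (with $\u$ fixing every exogenous variable, each counterfactual term $\pr(\y^i_{\x^i},\w^i_{\v^i}\mid\e^i,\u)$ is $0$ or $1$ and computable by forward evaluation of the structural equations in each mutilated world, so $L(\u)$ is a polynomial-time-verifiable certificate), and the observation that $H$ has only $n$ states, so summing it out keeps verification polynomial. The direct argument is the cleaner of the two and is essentially the same determinism property the paper invokes in the membership part of the proof of \theoremref{thm:d-reverse-MAP NP}; your one additional care point---admitting only those $\u$ for which each conditioning event $\e^i$ has positive probability---is also handled correctly.
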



\section{Unit Selection using Variable Elimination}
\label{sec:VE-RMAP}

\sectionref{sec:unit-selection-reduction} provided
a reduction from unit selection
on an SCM to Reverse-MAP on an objective model.
In \sectionref{sec:ve}, we provide a 
variable elimination (VE) algorithm
for Reverse-MAP which can be applied to the objective
model to solve unit selection. 
In \sectionref{sec:treewidth bounds}, we analyze
the complexity of this method and compare it to the complexity of Reverse-MAP on the underlying SCM.

\subsection{Reverse-MAP using Variable Elimination}\label{sec:ve-algorithm}
\label{sec:ve}
\def\GG{{\cal G}}

Our VE algorithm for Reverse-MAP will employ the same
machinery and techniques used in the VE algorithm for classical MAP~\citep{ai/Dechter99}. 
Hence, we will first review the VE algorithm for MAP using
the treatment in \citep[Ch~10]{DarwicheBook09}
and then discuss the algorithm for Reverse-MAP.

The VE algorithm is based on the notion of a {\em factor}
\(f(\X)\) which maps each instantiation \(\x\) of variables \(\X\) 
into a non-negative number \(f(\x)\). 
VE employs a number of factor operations including multiplying two factors (\(f \cdot g\)),
summing out a variable from a factor (\(\sum_X f\)),
maximizing out a variable from a factor (\(\max_X f\)),
and dividing two factors (\(f / g\)). 
Let $G$ be an SCM and assume its variables $\Z$ are partitioned into three disjoint sets $\U, \V, \E$, 
where $\U$ are the {\em target variables} and $\E$ 
are the {\em evidence variables.} 
Let $\S = \Z \setminus \U$ in the following
discussion. We will treat the CPT of each variable $Z$ in SCM \(G\)
as a factor over $Z$ and its parents $\P$, denoted $f_Z(Z\P)$.
The SCM distribution is then \(\pr(\Z) = \prod_{Z \in \Z} f_Z\).
We capture evidence \(\e\) by creating an evidence factor $\lambda_e(E)$ for each \(e \in \e\) with $\lambda_e(e') = 1$ if
\(e' = e\) and $\lambda_e(e') = 0$ otherwise.
The {\em MAP probability} is then given 
by\footnote{\label{foot:scalar}The left side of Equation~\ref{eq:map} is
a scalar (probability) while the right side is a factor
over an empty set of variables, which is called a 
{\em scalar factor.} Such a factor maps only one instantiation,
the empty one, to a scalar.}
\begin{equation}
\label{eq:map}
\map_p(\U,\e) =
    \max_\u \pr(\u, \e) = \max_\u \sum_\v \pr(\u,\v,\e) = \max_{\U} \sum_{\S} \prod_{Z \in \Z} f_{Z} \prod_{e \in \e} \lambda_e(E)
\end{equation}

\begin{wrapfigure}[11]{r}{0.23\textwidth}
  \begin{center}
  \vspace{-10mm}
    \includegraphics[width=0.16\textwidth]{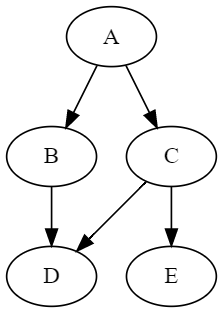}
  \end{center}
  \vspace{-7mm}
  \caption{SCM \label{fig:ve-scm}}
\end{wrapfigure}
This is in contrast to the {\em MAP instantiation}
which is \(\argmax_\u \pr(\u, \e)\). With some minor
bookkeeping, the VE algorithm
for computing the MAP probability can also return a
MAP instantiation; see, 
e.g.,~\citep[Ch~10]{DarwicheBook09}. Hence,
we will focus next on computing the MAP probability.

Consider the SCM in \figureref{fig:ve-scm} and
suppose $\U = \{A, B\}$ and the evidence $\e$ is $\{E = e\}$. 
In this case, \(\map_p(AB,e)\) will be equal to
\begin{equation}
    \max_{AB} \sum_{CDE} f_A(A) f_B(AB) f_C(AC) f_D(BCD) f_E(CE) \lambda_e(E) 
    \label{eqn:map-expression}
\end{equation}

\begin{algorithm}[t]
\small
    \caption{MAP\_VE$(G, \U, \e)$}\label{alg:ve-map}
    \Ainput{SCM $G$, target variables $\U$, evidence $\e$ } \\
    \Aoutput{scalar factor containing $\map_p(\U,\e)$}
    \begin{algorithmic}[1]
    \Procedure{main}{}
        \State $\pi_\S \gets$ an elimination order for non-target variables
        \State $\pi_\U \gets$ an elimination order for target variables $\U$
        \State $\F \gets \{f: f \mbox{ is a CPT of SCM } G\} \cup \{\lambda_e: \lambda_e \mbox{ is an evidence factor for } e\in \e\}$ \label{line:joint}
        \State $\GG \gets$ \Call{eliminate}{$\sum,\F, \pi_\S$} \label{line:sum}
        \State $p \gets$ \Call{eliminate}{$\max,\GG, \pi_\U$}  \label{line:max}
        \State \Return $p$
    \EndProcedure 
    \end{algorithmic}
\end{algorithm}

\begin{algorithm}[t]
\small
    \Ainput{an operation \(\bigcirc \in \{\sum,\max\}\), a set of factors $\F$, 
    a total variable order $\pi$} \\
    \Aoutput{a set of factors}
    \caption{Eliminating Variables using Sum or Max
    \label{alg:eliminate}}
    \begin{algorithmic}[1]
 \Procedure{eliminate}{$\bigcirc,\F,\pi$} 
    \For{$i = 1$ to length of order $\pi$}
        \State $V \gets i^{th}$ variable in order \(\pi\)
        \State $\GG \gets $ factors in $\F$ that mention variable $V$ \label{line:V-factors}
        \State $f_i \gets \prod_{f \in \GG} f$  
        \State $f_i \gets \bigcirc_{V} f_i$ \label{line:new-factor}
        \State replace factors \(\cal G\) in \(\F\) with factor \(f_i\)
        \EndFor
        \State \Return{$\F$}
    \EndProcedure 
\end{algorithmic}
\end{algorithm}

A naive evaluation of this expression multiplies
all factors to yield a factor $f(ABCDE)$ over all variables, then computes $\max_{AB}\sum_{CDE} f(ABCDE)$, leading to $O(n\exp(n))$ complexity where $n$ is the number of model variables. The VE algorithm tries to compute this expression more efficiently with pseudocode provided in \algorithmref{alg:ve-map} (MAP\_VE). The product of factors \(\F\) on Line~\ref{line:joint} represents the joint distribution $\pr(\Z, \e)$ so
we first sum out variables $\S$ from \(\F\) on
Line~\ref{line:sum} to compute a set of factors \(\GG\) whose product represents the marginal $\pr(\U,\e)$. 
We then maximize out variables $\U$ from \(\cal G\) on
Line~\ref{line:max} leading to a scalar factor \(p\) that 
contains the MAP probability (see Footnote~\ref{foot:scalar}).
\algorithmref{alg:ve-map} eliminates variables one by one
using \algorithmref{alg:eliminate} and a total variable order $\pi = \langle \pi_\S, \pi_\U \rangle$, known as an {\em elimination order.} MAP\_VE requires
variables $\U$ to appear last in order $\pi$ since summation does not commute with maximization. An order that satisfies this constraint is known as a {\em $\U$-constrained elimination order}. The complexity of 
MAP\_VE depends on the used elimination order $\pi$. In each elimination step of \algorithmref{alg:eliminate}, we multiply all factors that mention variable $\pi(i)$ to obtain factor $f_i$ on Line~\ref{line:new-factor}. The 
variables in $f_i$ are called a cluster $\C_i$ so 
eliminating variables $\pi(1), \ldots, \pi(n)$ induces clusters $\C_1, \ldots, \C_n.$ The {\em width} $w$ of elimination order \(\pi\) is the size of largest cluster minus one and the complexity of MAP\_VE is $O(n\exp(w))$.
 
The table below depicts the trace of MAP\_VE 
when computing the MAP probability in \equationref{eqn:map-expression}
using the elimination order $\pi = E,D,C,B,A$.
The trace shows that MAP\_VE evaluates the 
following factorized expression and that the width of
order $\pi$ is $2$ (largest cluster has size \(3\)):
\[
    \map_p(AB,e) = \max_A f_A(A) \bracket{\max_B f_B(AB) \bracket{\sum_C f_C(AC) \bracket{\sum_D f_D(BCD) \bracket{\sum_E f_E(CE) \lambda_E}}}}
\]

\begin{center}
\begin{small}
    \begin{tabular}{ |l|c|l|l|l| } 
        \hline
         $i$ & eliminated var & factors $\cal G$ (Line~\ref{line:V-factors}, \algorithmref{alg:eliminate}) & new factor $f_i$ (Line~\ref{line:new-factor}, \algorithmref{alg:eliminate}) & $\C_i$   \\
         \hline
         $1$ & $E$ & $f_E(CE) \ \lambda_E$ & 
         $f_1 = \sum_E f_E(CE) \ \lambda_E$ & CE \\ 
         $2$ & $D$ & $f_D(BCD)$ & $f_2 = \sum_D f_D(BCD)$ & BCD \\ 
         $3$ & $C$ & $f_C(AC)\ f_1(C)\ f_2(BC)$ & $f_3 = \sum_C f_C(AC)\ f_1(C)\ f_2(BC)$ & ABC \\ 
         $4$ & $B$ &   $f_B(AB)\ f_3(AB)$  & $ f_4 = \max_B f_B(AB)\ f_3(AB)$ & AB\\
         $5$ & $A$ & $f_{A}(A) \ f_4(A)$ & $p = \max_A f_{A}(A)\ f_4(A)$ & A \\
         \hline
    \end{tabular}
\end{small}
\end{center}
\bigskip

Choosing a good elimination order is critical for the 
complexity of VE. The {\em treewidth} of an SCM $G$ is defined as the minimum width attained by any elimination order. Since MAP requires $\U$-constrained orders, 
the {\em \(\U\)-constrained treewidth} of $G$ is defined as the minimum width attained by any $\U$-constrained elimination order~\citep{jair/ParkD04}.

We are now ready to introduce our VE algorithm for Reverse-MAP. Again, we assume that the model
variables $\Z$ are partitioned into disjoint 
sets $\U, \V, \E$, where $\U$ are the target variables
and \(\S = \Z \setminus \U\). 
But we further partition the evidence variables $\E$ into
\(\E_1\) and \(\E_2\). Again, we focus on computing the
Reverse-MAP probability $\rmap_p(\U,\e_1,\e_2)$ 
instead of the instantiation:
\[
\max_\u \pr(\e_1|\u, \e_2) 
= \frac{\pr(\u, \e_1, \e_2)}{\pr(\u, \e_2)} 
= \frac{\sum_\v \pr(\u, \v, \e_1, \e_2)}{\sum_\v \pr(\u, \v, \e_2)} 
= \max_{\U} \frac{\displaystyle \sum_{\S} \prod_{Z \in \Z} f_{Z} \prod_{e \in \e_1\cup\e_2} \lambda_e}
     {\displaystyle \sum_{\S} \prod_{Z \in \Z} f_{Z} \prod_{e \in \e_2} \lambda_e}
\]

Our algorithm, called RMAP\_VE, runs two passes of elimination as shown in \algorithmref{alg:ve-rmap}. In the first pass (Line \ref{line:sumout-1}), we sum out variables $\S$ under evidence $\e_1,\e_2$ and in the second pass (Line~\ref{line:sumout-2}), we sum out variables $\S$ under evidence $\e_2$. This leads to two sets of factors $\GG_1$ and $\GG_2$
which correspond to marginal distributions $\pr(\U,\e_1,\e_2)$ and $\pr(\U, \e_2)$. 
Now we need to divide
$\pr(\U,\e_1,\e_2)$ and $\pr(\U, \e_2)$ to compute 
$\pr(\e_1|\U, \e_2)$. 
We next show that this can be done efficiently by 
``dividing'' $\GG_1$ and $\GG_2$ as shown on
Line~\ref{line:division}.
The key idea is that if we run the two passes of elimination according to the same elimination order, then there will be a one-to-one correspondence between the factors in $\GG_1$ and $\GG_2$. Let \((g_1^i,g_2^i)\)
be the corresponding pairs of factors
for \(i = 1, \ldots, k\) where \(k=|\GG_1|=|\GG_2|\). 
What we need is \(\left(\prod_{i=1}^n g_1^i\right) / \left(\prod_{i=1}^n g_2^i\right)\) since this represents
$\pr(\e_1|\U, \e_2)$. But due to the mentioned
correspondence, this equals 
\(\prod_{i=1}^n g_1^i/g_2^i\).
Thus, we can divide each pair of corresponding factors to obtain the set of factors \(\GG\) 
as done on Line~ \ref{line:division}. 
We finally maximize out target variables $\U$ from $\GG$ to obtain the Reverse-MAP probability (Line~\ref{line:final}).

\begin{algorithm}[ht]
\small
    \caption{RMAP\_VE($G, \U, \e_1, \e_2)$}\label{alg:ve-rmap}
    \Ainput{SCM $G$, target variables $\U$, evidence $\e_1$ and $\e_2$} \\
    \Aoutput{scalar factor containing $\rmap_p(\U,\e_1,\e_2)$}
    \begin{algorithmic}[1]
    \Procedure{main}{}
        \State $\pi_\S \gets$ an elimination order for non-target variables
        \State $\pi_\U \gets$ an elimination order for target variables $\U$
        \State $\F_1 \gets \{f: f \mbox{ is a CPT of SCM } G\} \cup \{\lambda_e: \lambda_e \mbox{ is an evidence factor for } e\in \e_1,\e_2\}$
        \label{line:sumout-1}
        \State $\F_2 \gets \{f: f \mbox{ is a CPT of SCM } G\} \cup \{\lambda_e: \lambda_e \mbox{ is an evidence factor for } e\in \e_2\}$
        \label{line:sumout-2}
        \State $\GG_1 \gets$ \Call{eliminate}{$\sum,\F_1, \pi_\S$}
        \State $\GG_2 \gets$ \Call{eliminate}{$\sum,\F_2, \pi_\S$}
        \State $\GG \gets \{g_1/g_2: $ $g_1, g_2$ are corresponding factors in $\GG_1, \GG_2\}$ \label{line:division}
        \State $p \gets $ \Call{eliminate}{$\max,\GG, \pi_\U$} \label{line:final}
        \State \Return $p$
    \EndProcedure 
    \end{algorithmic}
\end{algorithm}

RMAP\_VE has the same complexity as MAP\_VE if both
use the same elimination order. Suppose there are $k$ factors in $\GG_1/\GG_2/\GG$ and the largest factor has size $c$. The cost of division on Line~\ref{line:division} is $O(k\exp(c))$ while the cost of maximization on Line~\ref{line:final} is at least $O(k\exp(c))$
so the cost of division is dominated by the cost of maximization. Hence, the complexity of RMAP\_VE is still $O(n\exp(w))$ where \(n\) is the number of variables
and $w$ is the width of used $\U$-constrained order $\pi$.

%

\subsection{Bounding the Complexity of Unit Selection using Variable Elimination}
\label{sec:treewidth bounds}
We can solve unit selection by applying RMAP\_VE
to an objective model of the SCM as shown by 
\theoremref{thm:unit-selection-reduction}.
However, RMAP\_VE (and MAP\_VE) is expected to be more expensive on the objective model compared to the 
given SCM since the former is larger and denser 
than the latter. But how much more expensive?
In particular, is RMAP\_VE always tractable on the objective model when it is tractable on the underlying SCM? 
We consider this question next using the lens of treewidth
which is commonly used to analyze elimination algorithms.
Recall also that MAP\_VE and RMAP\_VE have 
the same complexity when applied to the same SCM using 
the same target variables.

Our starting point is to study the treewidth of an 
objective model in relation to the treewidth of its 
underlying SCM. We will base
our study on the techniques and results reported in
\citep{han2022on} which studied the complexity of counterfactual reasoning. In particular, given an elimination 
order $\pi$ of SCM $G$, 
we next show how to construct an elimination 
order $\pi'$ for the 
objective model $G'$ while providing a bound on the width of
order \(\pi'\) in terms of the width of order \(\pi\).
Recall that we use $[X]$ and $[[X]]$ to denote the copies 
of variable $X$ in a triplet model where 
$X = [X] = [[X]]$ if $X$ is exogenous.
Moreover, if $U$ is a unit variable, 
then $U = U^1 = \cdots = U^n$ in an objective model.

\begin{definition}
\label{def:elimination order}
Let $G$ be an SCM and \(G'\) be a corresponding
objective model with \(n\) components.
If $\pi$ is an elimination order for \(G\),
the corresponding elimination order $\pi'$ for $G'$ is obtained by
replacing each non-unit variable $X$ in $\pi$ by  $X^1, \ldots, X^n, [X^1], \ldots, [X^n], [[X^1]], \ldots, [[X^n]]$
then
appending the mixture variable $H$ to the end of the order.
\end{definition}

Consider the elimination order $\pi = A, X, Y, U$ for the SCM in \figureref{fig:base-triplet}. The corresponding elimination order $\pi'$ for the objective model in \figureref{fig:objective model} 
is as follows:
\begin{equation*}
    \pi' = A^1, A^2, X^1, X^2, [X^1], [X^2], [[X^1]], [[X^2]], Y_1, Y_2, [Y_1], [Y_2], [[Y_1]], [[Y_2]], U, H
    \label{eqn:n-twin-order}
\end{equation*}

The following bound (\theoremref{thm:objective-width}) follows from \lemmaref{lem:add-node} and Theorem~\ref{thm:n-world} 
which concerns {\em \(n\)-world models.}
Given an SCM \(G\) and a subset \(\U\) of its roots, an \(n\)-world model
is obtained by creating \(n\) copies of \(G\) that
share nodes \(\U\)~\citep{han2022on}. This notion
corresponds to parallel worlds models~\citep{ijcai/AvinSP05}
when \(\U\) contains all roots of SCM \(G\).
An objective model with \(n\) components can be viewed as a 
\(3n\)-world model but with an additional mixture 
node \(H\) and some edges that originate from \(H\).
\lemmaref{lem:add-node} and \theoremref{thm:objective-width} are proven
in \appendixref{app:lemma-add-node} and
\appendixref{app:objective width}.

\begin{lemma}
\label{lem:one node}
\label{lem:add-node}
Consider an SCM $G$ and suppose SCM $G'$ is obtained from $G$ by adding a root node $H$ as a parent of some nodes in $G$. 
If $\pi$ is an elimination order for $G$ and has width $w$,
then $\pi' = \langle \pi, H \rangle$ is an elimination order 
for $G'$ and has width $w' \leq w + 1$.
\end{lemma}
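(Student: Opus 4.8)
The plan is to work with the \emph{interaction graph} (moral graph) of the factors and the standard graph-elimination view of width: eliminating a variable $V$ connects all of its current neighbors into a clique and then deletes $V$, its cluster $\C_V$ being $\{V\}$ together with those neighbors, and the width being $\max_V |\C_V| - 1$. The first observation is that passing from $G$ to $G'$ only adds the single node $H$ together with edges that join $H$ to each node it becomes a parent of and, through moralization, to the existing parents of those nodes; in particular, the subgraph induced on the original variables of $G$ is identical in the two interaction graphs.

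Next I would run $\pi$ on $G'$ and establish, by induction on the elimination steps, the invariant that when each original variable $V$ is eliminated its current neighbor set equals the neighbor set it would have when running $\pi$ on $G$, possibly together with the extra node $H$, and that the fill-in edges created \emph{among} original variables are exactly the same in both runs. The inductive step is that eliminating $V$ connects its neighbors pairwise: among the original variables this reproduces precisely the fill-ins of the $G$-run (by the induced-subgraph hypothesis), while the only new edges are those joining $H$ to the former neighbors of $V$, which are incident to $H$ alone. Hence after $V$ is deleted the induced subgraph on the surviving original variables still matches the $G$-run, and each survivor's neighbor set is again its $G$-neighbor set together with at most $\{H\}$.

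Granting the invariant, every cluster $\C_V$ of an original variable in the $G'$-run is contained in the corresponding $G$-run cluster together with $H$, so $|\C_V| \leq (w+1) + 1$ and thus $|\C_V| - 1 \leq w + 1$. Because $H$ is placed last in $\pi'$, by the time it is eliminated all original variables are already gone, so $H$ has no remaining neighbors and its cluster is the singleton $\{H\}$, contributing width $0$. Taking the maximum over all clusters yields $w' \leq w + 1$; and since $\pi'$ lists every variable of $G'$ exactly once, it is a valid (unconstrained) elimination order, which finishes the argument.

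The step I expect to be the main obstacle is making the induction airtight, i.e.\ ruling out that the presence of $H$ triggers a cascade of new fill-in edges among the original variables, which could otherwise inflate later clusters beyond $w+1$. This is exactly what the invariant forbids: every edge created on account of $H$ is incident to $H$, so the elimination geometry of the original variables is untouched and the entire effect of $H$ is to enlarge each cluster by at most one node. Placing $H$ last is precisely what prevents its own elimination from merging a large set of original neighbors into a single clique.
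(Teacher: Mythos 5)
Your proposal is correct and follows essentially the same route as the paper's proof: both work in the moral graph and argue by induction that every edge present in the $G'$-run but not in the $G$-run (whether from moralization or from fill-in) is incident on $H$, so each cluster grows by at most the single node $H$ and the width increases by at most one. Your explicit handling of the final elimination of $H$ (whose cluster is the singleton $\{H\}$ since it is placed last) is a small completeness detail the paper leaves implicit, but the argument is otherwise the same.
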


\begin{theorem}[\cite{han2022on}]
\label{thm:n-world}
Consider an SCM \(G\), a subset \(\U\) of its roots and
a corresponding \(n\)-world model \(G'\).
If \(G\) has 
an elimination order \(\pi\) with width \(w\), then there exists a corresponding elimination order \(\pi'\) of \(G'\) 
that has width $w' \leq n(w+1) - 1$.
\end{theorem}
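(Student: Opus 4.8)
The plan is to exhibit the elimination order $\pi'$ explicitly and then bound its width by analyzing how clusters in $G'$ relate to clusters in $G$. I would construct $\pi'$ from $\pi$ by replacing each non-unit variable $X$ with its $n$ copies $X^1,\ldots,X^n$ listed consecutively, while keeping each unit variable $U\in\U$ as the single shared node in its original position. It is convenient to group $\pi'$ into \emph{super-steps}, where super-step $k$ eliminates all copies of the $k$-th variable $V_k$ of $\pi$. I would work throughout with induced (fill-in) graphs: write $G_{(k)}$ for the moralized graph of $G$ augmented with the fill-in edges from eliminating $V_1,\ldots,V_{k-1}$ and restricted to the surviving variables, and recall that the cluster of $V_k$ is $\{V_k\}\cup\nei_{G_{(k)}}(V_k)$, whose size is at most $w+1$. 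Define $G'_{(k)}$ analogously for $G'$ and $\pi'$, taken just before super-step $k$.

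The heart of the argument is an invariant, proved by induction on $k$: in $G'_{(k)}$, two surviving copies $Y^l$ and $Z^{l'}$ are adjacent only if $Y$ and $Z$ are adjacent in $G_{(k)}$ (a unit variable being identified with its unique copy). The base case $k=1$ compares the moralized graphs directly, since every parent--child edge and every marriage edge of $G'$ projects onto a corresponding edge of the moralized $G$: copies inherit the adjacency structure of $G$, and common children only force marriages among copies of variables that are already co-parents in $G$. For the inductive step I would check that super-step $k$ preserves the invariant: eliminating the copies of $V_k$ adds fill-in edges only between copies $Y^l,Z^{l'}$ with $Y,Z\in\nei_{G_{(k)}}(V_k)$, and such $Y,Z$ become adjacent in $G_{(k+1)}$ by the fill-in rule for $G$, while pre-existing edges among surviving vertices are retained on both sides. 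A key point to get right is that eliminating a \emph{shared} unit variable can connect copies living in different worlds, which is precisely the mechanism that entangles the $n$ worlds; these new edges still respect the invariant because they project to fill-in edges of $G$.

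Granting the invariant, the width bound follows. When super-step $k$ eliminates a copy $X^j$ (with $X=V_k$), its current neighbors lie among the surviving copies of $X$ itself and the copies of $\nei_{G_{(k)}}(X)$, so its cluster is contained in $\{X^1,\ldots,X^n\}\cup\bigcup_{Y\in\nei_{G_{(k)}}(X)}\{Y^1,\ldots,Y^n\}$. This set has at most $n\cdot|\{X\}\cup\nei_{G_{(k)}}(X)|=n\cdot|\C_{V_k}|\leq n(w+1)$ elements (fewer when some neighbor is a unit variable), so every cluster of $\pi'$ has size at most $n(w+1)$ and hence $w'\leq n(w+1)-1$. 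I expect the main obstacle to be the bookkeeping inside a single super-step: because consecutive copies $X^1,\ldots,X^n$ may already be mutually adjacent through a previously eliminated shared unit variable, eliminating one copy creates fill-in among the others, and I must verify that these intermediate fill-ins never push a cluster outside the stated set of copies---which reduces to checking that the neighbors a copy acquires are always themselves copies of $X$ or of $X$'s neighbors in $G_{(k)}$.
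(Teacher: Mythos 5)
Your argument is essentially sound, and it is worth noting up front that the paper itself does not prove \theoremref{thm:n-world} --- it imports the result from \citep{han2022on} --- so there is no in-paper proof to match against; the closest analogue is the proof of Lemma~\ref{lem:constrained-width1} in \appendixref{app:constrained-tw}, which runs the same kind of induction over the graph sequence but, because the order there is $\U$-constrained (shared roots eliminated last), can maintain an \emph{exact} characterization of neighborhoods ($G'_i(\dup{X}{k}) = \dup{G_i(X)}{k}$ for non-shared $X$) and so obtains $w'=w$ rather than the factor-$n$ blowup. Your proof correctly identifies the mechanism that forces the weaker bound in the unconstrained setting: eliminating a shared root early entangles the worlds. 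One fix is needed in the write-up: your invariant as literally stated (``$Y^l$ and $Z^{l'}$ adjacent only if $Y$ and $Z$ are adjacent in $G_{(k)}$'') is false once a shared root is eliminated, since that step makes distinct copies $X^l, X^{l'}$ of the \emph{same} variable adjacent, and $X$ is not adjacent to itself in $G_{(k)}$. The invariant must read ``only if $Y=Z$ or $Y\sim Z$ in $G_{(k)}$.'' You are clearly aware of this (your cluster bound includes all $n$ copies of $X$ itself, and your closing paragraph flags exactly this entanglement), and with the amended invariant the induction and the bound $|\C'| \leq n\cdot|\{X\}\cup\nei_{G_{(k)}}(X)| \leq n(w+1)$ go through, including the intra-super-step fill-ins you worry about, since those stay inside the set $\{X^1,\ldots,X^n\}\cup\{Y^l : Y\in\nei_{G_{(k)}}(X)\}$. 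So: correct modulo restating the invariant, and methodologically consistent with the projection/induction technique the paper uses for its constrained-width lemmas.
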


\begin{theorem}
\label{thm:width}
\label{thm:objective-width}
Consider an SCM $G$ and a corresponding objective model
\(G'\) with \(n\) components.
Let \(\pi\) be an elimination order for \(G\) and let 
\(\pi'\) be the corresponding elimination order for \(G'\). 
If \(\pi\) has width \(w\) and \(\pi'\) has width \(w'\),
then $w' \leq 3n(w + 1)$.
\end{theorem}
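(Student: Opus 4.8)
The plan is to assemble the bound from the two ingredients flagged just before the statement: the \(n\)-world width bound of \theoremref{thm:n-world} and the single-root bound of \lemmaref{lem:add-node}. The first move is to remove the mixture node from consideration. Let \(\hat{G}\) be the objective model \(G'\) with node \(H\) and all edges leaving \(H\) deleted, and let \(\hat\pi\) be the order \(\pi'\) of Definition~\ref{def:elimination order} with its last entry \(H\) dropped, so that \(\pi' = \langle \hat\pi, H \rangle\). Since \(H\) is a root of \(G'\) whose children are exactly the outcome nodes, \(G'\) is obtained from \(\hat{G}\) by adding the single root \(H\) as a parent of some nodes, and \(\pi'\) appends \(H\) to \(\hat\pi\). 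Thus, once I show that \(\hat\pi\) has width at most \(3n(w+1)-1\) on \(\hat{G}\), \lemmaref{lem:add-node} (applied to \(\hat{G}\) and its extension \(G'\)) pays exactly one more unit, giving \(w' \le (3n(w+1)-1) + 1 = 3n(w+1)\), as required.

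The core step is to bound the width of \(\hat\pi\) on \(\hat{G}\). Here I would identify \(\hat{G}\) with a \(3n\)-world model of \(G\) whose shared nodes are the unit variables \(\U\): each of the \(n\) components of \(G'\) is a triplet model, i.e.\ three copies of \(G\), so the \(n\) triplets together furnish \(3n\) copies of \(G\) glued along \(\U\), which is precisely the \(3n\)-world construction underlying \theoremref{thm:n-world}. Moreover, \(\hat\pi\) is exactly the \emph{corresponding} order that theorem uses: Definition~\ref{def:elimination order} replaces each non-unit variable of \(G\) by its copies across the \(3n\) worlds while keeping each (shared) unit variable as a single entry. Instantiating \theoremref{thm:n-world} with its parameter \(n\) replaced by \(3n\) would then give width at most \(3n(w+1)-1\) for \(\hat\pi\) on \(\hat{G}\)---modulo one subtlety addressed next---which is what the first paragraph needs.

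The step that needs genuine care---and the one I expect to be the main obstacle---is that a triplet model shares \emph{all} exogenous variables within a component, not only the unit variables \(\U\). Consequently \(\hat{G}\) is not literally the generic \(3n\)-world model: a non-unit exogenous root has only \(n\) distinct copies in \(\hat{G}\) (one per triplet) rather than \(3n\), and eliminating such a shared root induces fill-in among the within-triplet copies of its children that would be absent in the generic model. Rather than argue abstractly that this extra sharing cannot hurt, I would re-verify the cluster-covering property that drives \theoremref{thm:n-world} directly on \(\hat{G}\): when the copies of a variable \(X\) of \(G\) are eliminated consecutively along \(\hat\pi\), the resulting cluster is covered by the copies (across the \(3n\) worlds) of the single cluster that \(X\) induces under \(\pi\) in \(G\). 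Since that cluster has at most \(w+1\) variables and each variable contributes at most \(3n\) copies, every cluster of \(\hat\pi\) has at most \(3n(w+1)\) variables, so its width is at most \(3n(w+1)-1\) irrespective of the within-triplet exogenous sharing (which only decreases the number of distinct copies present). With this cluster bound in hand, the arithmetic of the first paragraph closes the proof.
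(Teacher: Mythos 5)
Your proof follows the paper's own argument for \theoremref{thm:objective-width} exactly: strip the mixture node \(H\), bound the width of the remaining model as a \(3n\)-world model via \theoremref{thm:n-world}, and pay one more unit for \(H\) via \lemmaref{lem:add-node}. The subtlety you flag in your last paragraph---that each triplet shares \emph{all} exogenous variables internally, so the \(H\)-less objective model is not literally a generic \(3n\)-world model sharing only \(\U\)---is real and is glossed over in the paper's one-line proof; your cluster-covering re-verification closes it correctly (an equivalent fix, used by the paper for the constrained case in \appendixref{app:constrained-tw}, is to apply \theoremref{thm:n-world} twice, first to get the triplet from \(G\) and then to get the \(n\)-world model of the triplet, which yields the same bound \(3n(w+1)-1\)).
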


\begin{corollary}
If $w$ is the treewidth of an SCM $G$ and $w'$ is the treewidth of a corresponding objective model $G'$ with
\(n\) components, then  $w'\leq 3n(w+1)$.
\label{cor:objective-treewidth}
\end{corollary}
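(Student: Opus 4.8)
The plan is to derive this bound directly from \theoremref{thm:objective-width} together with the definition of treewidth as the minimum width attained by any elimination order. The essential observation is that \theoremref{thm:objective-width} controls the width of one \emph{specific} order $\pi'$---namely the one built from an order $\pi$ of $G$ via Definition~\ref{def:elimination order}---whereas the treewidth of $G'$ is a minimum taken over \emph{all} elimination orders of $G'$; passing from the former to the latter is simply an inequality in the favorable direction.

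Concretely, I would proceed as follows. First, since $w$ is the treewidth of $G$, the definition of treewidth guarantees that this minimum is attained, so there exists an elimination order $\pi$ for $G$ whose width equals $w$. Second, I apply the construction of Definition~\ref{def:elimination order} to this $\pi$ to obtain the corresponding elimination order $\pi'$ for the objective model $G'$; this construction is already known to yield a valid elimination order for $G'$. Third, I invoke \theoremref{thm:objective-width}, which guarantees that the width of this particular $\pi'$ is at most $3n(w+1)$.

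Finally, I would close the gap between ``the width of the constructed order $\pi'$'' and ``the treewidth of $G'$.'' Since the treewidth of $G'$ is by definition the minimum width over all elimination orders of $G'$, it is at most the width of the single order $\pi'$ we exhibited. Chaining this observation with the bound from \theoremref{thm:objective-width} gives that the treewidth $w'$ of $G'$ satisfies $w' \leq 3n(w+1)$, which is exactly the claim.

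I do not expect any substantive obstacle here: the corollary is an immediate specialization of \theoremref{thm:objective-width} to a width-minimizing order of $G$. The only point deserving care is that an \emph{optimal} order for $G$ need not induce an \emph{optimal} order for $G'$, so the constructed $\pi'$ may not certify the exact treewidth of $G'$---but this is precisely why the statement is phrased as an upper bound rather than an equality, and it is exactly the direction in which the minimum-over-orders definition of treewidth works in our favor.
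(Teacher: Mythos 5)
Your proposal is correct and matches the paper's (implicit) reasoning exactly: the corollary is obtained by applying \theoremref{thm:objective-width} to a minimum-width elimination order of $G$ and then noting that the treewidth of $G'$ is at most the width of the constructed order $\pi'$. The paper treats this as immediate and offers no separate proof, so there is nothing to add.
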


As mentioned earlier, RMAP\_VE and MAP\_VE require
a \(\U\)-constrained elimination orders
in which unit variables $\U$ appear last in the order. Hence,
a \(\U\)-constrained elimination order for an objective 
model must place the mixture variable $H$ before $\U$.
This leads to the next definition.

\begin{definition}
\label{def:constrain-elimination-order}
Let $G$ be an SCM with unit variables $\U$ and let
\(G'\) be a corresponding objective model with \(n\)
components. If $\pi$ is a $\U$-constrained elimination 
order for $G$, the corresponding $\U$-constrained elimination order $\pi'$ for \(G'\) is obtained by
replacing each non-unit variable $X$ in $\pi$ by  $X^1, \ldots, X^n, [X^1], \ldots, [X^n], [[X^1]], \ldots, [[X^n]]$
then
inserting mixture variable $H$ just before $\U$.
\end{definition}

Consider the \(\U\)-constrained order 
\(\pi = A, X, Y, U\) for the SCM
in \figureref{fig:base-triplet}.
The corresponding $\U$-constrained elimination order for the objective model in 
\figureref{fig:objective model} 
is
\begin{equation*}
     \pi' = A^1, A^2, X^1, X^2, [X^1], [X^2], [[X^1]], [[X^2]], Y_1, Y_2, [Y_1], [Y_2], [[Y_1]], [[Y_2]], H, U
    \label{eqn:constrain-n-twin-order}
\end{equation*}

We now have the following bound on the \(\U\)-constrained treewidth
of objective models, which is somewhat unexpected when compared
to the bound on treewidth. In particular, 
while the bound on treewidth grows linearly 
in the number of components in the objective model, the bound on
\(\U\)-constrained treewidth is independent of such a number.
Moreover, the bound on \(\U\)-constrained treewidth 
can depend on 
the number of unit variables which is not the case for treewidth.

\begin{theorem}
\label{thm:constrained width bound}
\label{thm:objective-constrain-width}
Let $G$ be an SCM with unit variables $\U$ and let
\(G'\) be a corresponding objective model. 
If $\pi$ is a $\U$-constrained elimination 
order for $G$ with width \(w\) and \(\pi'\) is
the corresponding $\U$-constrained elimination order 
for \(G'\) with width \(w'\), then
$w' \leq \max(3w + 3, |\U|)$.
If the objective function in
\equationref{eqn:objective-function} has one outcome
variable (\(\Y^i=\W^i=\{Y\}\) for all \(i\)), 
then $w' \leq 3w + 3$.
\end{theorem}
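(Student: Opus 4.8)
The plan is to evaluate the width of the specific $\U$-constrained order $\pi'$ of Definition~\ref{def:constrain-elimination-order} by splitting its elimination into three phases and bounding the largest cluster in each: Phase~1 eliminates the $3n$ copies of the non-unit variables, Phase~2 eliminates the mixture node $H$, and Phase~3 eliminates the unit variables $\U$. Since width is the size of the largest cluster minus one, it suffices to bound the clusters of each phase. Phases~2 and~3 are easy: once all non-unit variables are gone, every variable still adjacent to $H$ is a unit variable, so $H$'s cluster lies in $\{H\}\cup\U$ and has size at most $|\U|+1$; and every Phase-3 cluster is a set of unit variables and so has size at most $|\U|$. The work is therefore in Phase~1, where I will show that every cluster has size at most $3(w+1)+1$. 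Together these give a largest cluster of size at most $\max(3w+4,\,|\U|+1)$ and hence $w' \le \max(3w+3,\,|\U|)$.

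For Phase~1 the decisive observation is that, because $\U$ is eliminated last, two distinct components $G^i$ and $G^j$ cannot interact during this phase: they share only the unit variables $\U$, which are still alive, and the only other variable linking components, $H$, is eliminated only in Phase~2; hence no fill-in edge crosses between components. Consequently every Phase-1 cluster is contained in a single component together with the shared $\U$. A component is a triplet, i.e.\ a $3$-world model whose shared variables are all the exogenous variables (including $\U$), so applying \theoremref{thm:n-world} with $n=3$ to that component bounds the size of each such cluster by $3(w+1)$. Finally, $H$ is a single variable, so its appearance in the clusters of outcome nodes and of the variables into which $H$ spreads by fill-in adds at most one, yielding Phase-1 clusters of size at most $3(w+1)+1 = 3w+4$.

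For the single-outcome refinement ($\Y^i=\W^i=\{Y\}$ for all $i$) I would sharpen the Phase-2 bound. A unit variable is adjacent to $H$ at the moment $H$ is eliminated exactly when it is joined to an outcome node through non-unit variables; since such a path cannot leave a component, and since the components are isomorphic and share precisely $\U$, this holds exactly for the set $N_Y$ of unit variables connected to $Y$ through non-unit variables of $G$. The key point is that $N_Y$ is a clique in the interaction graph left after eliminating all non-unit variables of $G$: the non-unit variables joining $N_Y$ to $Y$ form a connected set whose elimination makes its unit boundary $N_Y$ a clique. As $\pi$ is $\U$-constrained, this clique must sit inside one of the clusters that $\pi$ forms while eliminating $\U$, so $|N_Y| \le w+1$ and $H$'s cluster has size at most $w+2$. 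Moreover $N_Y$ is already a clique before $H$ is removed, so eliminating $H$ creates no new fill-in among $\U$ and the Phase-3 clusters coincide with those of $\pi$ on $G$ (again because the components are isomorphic and share exactly $\U$), of size at most $w+1$. The largest cluster is then the Phase-1 cluster of size $3w+4$, giving $w' \le 3w+3$.

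The main obstacle is making the Phase-1 bound fully rigorous for the world-major order that Definition~\ref{def:constrain-elimination-order} induces on each triplet: one must check that this order attains the $3(w+1)$ bound of \theoremref{thm:n-world}, carefully tracking the fill-in produced when a shared non-unit exogenous node is eliminated while its children in all three worlds are still alive, and must confirm that $H$ induces no cross-component fill-in before it is eliminated. The clique argument of the single-outcome case is the other delicate point, since $N_Y$ may reach $Y$ through non-unit variables eliminated both before and after $Y$ in $\pi$.
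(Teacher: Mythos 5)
Your proof is correct and follows essentially the same route as the paper's: the paper packages your Phase-1 component-isolation argument as Lemma~\ref{lem:constrained-width1}, your per-component triplet bound as Theorem~\ref{thm:n-world}, and your Phase-2/Phase-3 treatment of $H$ (including the clique argument that bounds $H$'s unit neighborhood by $w+1$ in the single-outcome case) as Lemma~\ref{lem:constrained-width2}, composing them by viewing an augmented objective model as an $n$-world model of a triplet-plus-$H$ base. The one imprecision is your assertion that $N_Y$ is already a clique in the objective model before $H$ is eliminated; what your argument actually establishes (and all that is needed) is that $N_Y$ is a clique in the graph obtained by eliminating the non-unit variables from $G$ itself, so that the post-$H$ graph on $\U$ is a subgraph of the graph on which $\pi$ eliminates $\U$ in $G$, which still yields the $w+1$ bound on the Phase-3 clusters.
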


\begin{corollary}
\label{cor:objective-constrained-treewidth}
Let \(G\) be an SCM with unit variables \(\U\) and 
let \(G'\) be a corresponding objective model.
If $w$ and $w'$ are the $\U$-constrained treewidths of \(G\) 
and \(G'\), then $w' \leq \max(3w + 3, |\U|)$.
Moreover, if the objective function in
\equationref{eqn:objective-function} has a single outcome
variable, then $w' \leq 3w + 3$.
\end{corollary}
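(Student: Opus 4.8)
The plan is to obtain the corollary as an immediate consequence of \theoremref{thm:objective-constrain-width} together with the definition of $\U$-constrained treewidth. Recall that the $\U$-constrained treewidth of a model is, by definition, the minimum width attained over all $\U$-constrained elimination orders of that model. Hence it suffices to exhibit a single $\U$-constrained elimination order for $G'$ whose width is bounded by $\max(3w+3,|\U|)$; the treewidth $w'$ of $G'$, being a minimum, can only be smaller.

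First I would let $\pi$ be a $\U$-constrained elimination order for $G$ that actually attains the $\U$-constrained treewidth $w$ of $G$. Such an order exists because the treewidth is a minimum over the nonempty, finite set of $\U$-constrained orders of $G$, so the minimum is achieved. Next I would form the corresponding $\U$-constrained elimination order $\pi'$ for $G'$ exactly as prescribed by Definition~\ref{def:constrain-elimination-order}: replace each non-unit variable $X$ in $\pi$ by its $3n$ copies $X^1,\ldots,X^n,[X^1],\ldots,[X^n],[[X^1]],\ldots,[[X^n]]$, and insert the mixture variable $H$ just before $\U$. Since $\pi$ places the unit variables last and $H$ is inserted immediately before $\U$, the order $\pi'$ is a legitimate $\U$-constrained elimination order for $G'$.

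Then I would apply \theoremref{thm:objective-constrain-width} to this specific pair $(\pi,\pi')$. Writing $w$ for the width of $\pi$ (which equals the constrained treewidth of $G$ by construction), the theorem gives that the width of $\pi'$ is at most $\max(3w+3,|\U|)$. Because $w'$ is the minimum width over all $\U$-constrained orders of $G'$, we conclude $w' \leq \mathrm{width}(\pi') \leq \max(3w+3,|\U|)$, which is the first claim. For the single-outcome case the argument is identical, except that the second (sharper) bound of \theoremref{thm:objective-constrain-width} applies, giving $w' \leq 3w+3$.

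I do not expect a genuine obstacle here, since all the real work is carried by \theoremref{thm:objective-constrain-width}; the corollary is the routine step of lifting a width bound on one specific order to a bound on treewidth. The only point deserving a word of care is confirming that the order $\pi$ achieving the constrained treewidth of $G$ is itself $\U$-constrained (so Definition~\ref{def:constrain-elimination-order} applies to it) and that the resulting $\pi'$ is $\U$-constrained (so it is a legitimate competitor in the minimization defining $w'$); both hold by the placement of $H$ immediately before $\U$.
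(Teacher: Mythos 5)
Your argument is correct and matches the paper's (implicit) reasoning: the paper states the corollary without a separate proof precisely because it follows by instantiating Theorem~\ref{thm:objective-constrain-width} with a width-optimal $\U$-constrained order for $G$ and noting that the $\U$-constrained treewidth of $G'$ is a minimum over all such orders, hence bounded by the width of the constructed $\pi'$. Your care about $\pi$ and $\pi'$ both being $\U$-constrained is the right (and only) point to check, and it holds as you say.
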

The above bounds can be significantly tighter depending 
on the objective function properties. 
\corollaryref{cor:objective-constrained-treewidth} 
identifies one such property which is satisfied by the benefit function in~\citep{ijcai/LiP19}; see Equation~\eqref{eqn:benefit-ang}. Moreover, 
the factor \(3\) in these bounds is an implication of using 
a triplet model which may not be necessary. Consider 
components 
\(\pr(\y^i_{\x^i},\w^i_{\v^i}|\e^i,\u)\) in the objective function of \equationref{eqn:objective-function}. 
If \(\E^i\!\!=\!\!\emptyset\) for all \(i\), 
then a twin model is
sufficient when building an objective model (similarly
if \(\Y^i\!\!=\!\!\X^i\!\!=\!\!\emptyset\) or \(\W^i\!\!=\!\!\V^i\!\!=\!\!\emptyset\)). 
The objective function in Equation~\eqref{eqn:benefit-ang}, from~\citep{ijcai/LiP19}, has
\(\E^i\!\!=\!\!\emptyset\) for all \(i\)
so it leads to the tighter bound \(w' \leq 2w + 2\). 
More generally, if the objective function 
properties lead to removing the
dependence on \(|\U|\) in the bound of 
\corollaryref{cor:objective-constrained-treewidth}, then
RMAP\_VE on an objective
model is tractable if RMAP\_VE (MAP\_VE)
is tractable on the underlying SCM.
Otherwise, the bound in 
\corollaryref{cor:objective-constrained-treewidth}
does not guarantee this. 
Recall that MAP, Reverse-MAP
and unit selection using~\equationref{eqn:objective-function}
are all \(\NP^\PP\)-complete as shown earlier.

We provide in Appendix~\ref{app:experiment} a preliminary experiment and an extensive discussion in relation to the complexities of three algorithms:  
(1)~MAP\_VE (Algorithm~\ref{alg:ve-map}) which solves MAP by operating on an SCM;
(2)~RMAP\_VE (Algorithm~\ref{alg:ve-rmap}) which solves unit selection by operating on an objective model;
and (3)~a baseline, bruteforce method which solves unit selection by operating on a twin or triplet model (depending on the objective function).
The main finding of the experiment is that, as the size of the problem grows,\footnote{The size of the problem is measured by the number of nodes in the SCM and the number of unit variables.}
the gap between the complexities of MAP\_VE and RMAP\_VE narrows while the gap between the complexities of  RMAP\_VE and the bruteforce method grows (the bruteforce method is significantly worse and becomes impractical pretty quickly).

Appendix~\ref{app:experiment} also identifies a class of SCM structures (and unit variables) for which the number of unit variables is unbounded but the complexity of RMAP\_VE on an objective model is bounded.

\section{Conclusion}
\label{sec:conclusion}

We studied the unit selection problem in a computational
setting which complements existing studies. We assumed a
fully specified structural causal model so we can compute
point values of causal objective functions, allowing us
to entertain a broader class of functions than
is normally considered. We showed that the unit selection
problem with this class of objective functions is 
\(\NP^\PP\)-complete, similar to the classical MAP problem,
and identified an intuitive condition under which it
is \(\NP\)-complete. We further provided an exact algorithm
for the unit selection problem 
based on variable elimination and characterized its 
complexity in terms of treewidth, while relating this
complexity to that of MAP inference. In the process,
we defined a new inference problem,
Reverse-MAP, which is also \(\NP^\PP\)-complete but
captures the essence of unit selection more than MAP
does.

\acks{We thank Yizuo Chen, Yunqiu Han, 
Ang Li and Scott Mueller for providing useful
feedback on an earlier version of this paper.
This work has been partially supported by ONR grant N000142212501.}

\bibliography{bib/reference,bib/references,bib/references2,bib/refs}

\newpage
\appendix

\section{Proof of \theoremref{thm:unit-selection-reduction}}
\label{app:reduction proof}
The proof of this theorem requires a lemma which requires
the following definition.
We will say that a set of variables $\Z$ {\em decomposes} a DAG if removing the outgoing edges from $\Z$ splits the DAG into at least
two disconnected components. 

\begin{lemma}
Consider an SCM $G$ with distribution $\pr$ and three disjoint set of variables $\X, \Y, \Z$. Suppose $\Z$ decomposes $G$
into disconnected components $G_1$ and $G_2$. If $\X_1, \Y_1$ are subsets of $\X, \Y$ pertaining to $G_1$, 
and $\X_2, \Y_2$ are subsets of $\X, \Y$ pertaining to  $G_2$,
then $\pr(\y|\x, \z) = \pr(\y_1|\x_1, \z) \pr(\y_2|\x_2, \z)$.
\label{lem:disconnect}
\end{lemma}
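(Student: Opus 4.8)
The plan is to reduce the identity to a single conditional‑independence statement and then finish with a short algebraic manipulation. First I would note that since \(\X,\Y\) are disjoint from \(\Z\) and \(G_1,G_2\) are the components into which \(G\) splits, every variable of \(\X\) (resp.\ \(\Y\)) lies in exactly one component, so \(\X=\X_1\cup\X_2\) and \(\Y=\Y_1\cup\Y_2\) are disjoint unions and \(\x=\x_1\x_2\), \(\y=\y_1\y_2\). The core claim I would establish is that \((\X_1\cup\Y_1)\) is d-separated from \((\X_2\cup\Y_2)\) by \(\Z\); by the soundness of d-separation for Bayesian networks, and hence for SCMs~\citep{DarwicheBook09}, this yields \((\X_1,\Y_1)\perp(\X_2,\Y_2)\mid\Z\), from which the factorization \(\pr(\y\mid\x,\z)=\pr(\y_1\mid\x_1,\z)\,\pr(\y_2\mid\x_2,\z)\) will drop out.

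The heart of the argument is the d-separation step, which I would prove directly from the decomposition hypothesis. Let \(p\) be any path (trail) in \(G\) between a node \(a\in\X_1\cup\Y_1\) and a node \(b\in\X_2\cup\Y_2\). Since \(a\) and \(b\) lie in the components \(G_1,G_2\) that become disconnected once the outgoing edges of \(\Z\) are deleted, \(p\) must traverse at least one such outgoing edge, say \(Z\to C\) with \(Z\in\Z\). Because \(\X,\Y,\Z\) are disjoint, \(Z\) is an interior node of \(p\); one of its two incident path-edges is the outgoing edge \(Z\to C\), so its two incident edges cannot both point into \(Z\). Hence \(Z\) is a non-collider on \(p\) (a chain or a fork), and since \(Z\in\Z\) lies in the conditioning set, \(p\) is blocked at \(Z\). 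As this holds for every connecting path, \(\Z\) d-separates the two blocks. The subtlety to flag — that conditioning on \(\Z\) might instead \emph{open} a collider at some \(Z\in\Z\) and create a dependence — cannot arise precisely because the connecting edge is an outgoing edge of \(\Z\), which forces \(Z\) to be a non-collider; this is exactly where the ``remove outgoing edges'' formulation of decomposition does its work, and I expect this to be the only genuine obstacle in the proof.

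With the independence in hand, the remainder is routine. From \((\X_1,\Y_1)\perp(\X_2,\Y_2)\mid\Z\) I obtain \(\pr(\x_1,\y_1,\x_2,\y_2\mid\z)=\pr(\x_1,\y_1\mid\z)\,\pr(\x_2,\y_2\mid\z)\), and marginalizing out \(\Y_1,\Y_2\) gives \(\pr(\x_1,\x_2\mid\z)=\pr(\x_1\mid\z)\,\pr(\x_2\mid\z)\). Dividing the two and using \(\x=\x_1\x_2\), \(\y=\y_1\y_2\),
\[
\pr(\y\mid\x,\z)=\pr(\y_1,\y_2\mid\x_1,\x_2,\z)=\frac{\pr(\x_1,\y_1\mid\z)\,\pr(\x_2,\y_2\mid\z)}{\pr(\x_1\mid\z)\,\pr(\x_2\mid\z)}=\pr(\y_1\mid\x_1,\z)\,\pr(\y_2\mid\x_2,\z),
\]
which is the desired identity. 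The appeal to d-separation soundness and the final algebra are standard, so essentially all the content lives in the blocking argument of the preceding paragraph.
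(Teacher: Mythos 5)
Your proof is correct and follows essentially the same route as the paper's: establish that $\Z$ d-separates $\X_1\Y_1$ from $\X_2\Y_2$, then factor the numerator and denominator of $\pr(\y,\x|\z)/\pr(\x|\z)$ and regroup. The only difference is that the paper simply asserts the d-separation facts as consequences of the decomposition, whereas you supply the explicit path-blocking argument (every connecting trail must traverse an outgoing edge of $\Z$, forcing a non-collider in the conditioning set), which is a welcome extra level of detail rather than a different approach.
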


\begin{proof}
Since \(\Z\) decomposes \(G\), we have $\dsep_G(\X_1, \Z, \X_2)$ and $\dsep_G(\X_1\Y_1, \Z, \X_2\Y_2)$. We have:
    \begin{align}
        \pr(\y|\x, \z) &= \frac{\pr(\y, \x|\z)}{\pr(\x|\z)} = \frac{\pr(\y_1, \y_2, \x_1, \x_2|\z)}{\pr(\x_1, \x_2|\z)} = \frac{\pr(\y_1, \x_1|\z) \pr(\y_1, \x_1|\z)}{\pr(\x_1|\z) \pr(\x_2|\z)} \label{eqn:cutset} \\
                       &= \left[\frac{\pr(\y_1, \x_1|\z)}{\pr(\x_1|\z)}\right] \left[\frac{\pr(\y_2, \x_2|\z)}{\pr(\x_2|\z)}\right] 
                       = \pr(\y_1|\x_1, \z) \pr(\y_2|\x_2, \z) \nonumber
    \end{align}
\equationref{eqn:cutset} follows from $\dsep_G(\X_1, \Z, \X_2)$ and $\dsep_G(\X_1\Y_1, \Z, \X_2\Y_2)$. This concludes our proof. Although we only consider the case of two subnetworks here, it is easy to see that this lemma generalizes to an arbitrary number of subnetworks decomposed by $\Z$. 
\end{proof}

We are now ready to prove \theoremref{thm:unit-selection-reduction}.
By construction of $G'$, $\U\cup \{H\}$ decomposes $G'$ into 
its $n$ components $G^1, G^2, \ldots G^n$. We have:
\begin{align}
    \pr'(\y, \w \mid \x, \v, \e, \u) &= \sum_{i=1}^n \pr'(\y, \w, h_i \mid \x, \v, \e, \u) \nonumber \\
        &= \sum_{i=1}^n \pr'(\y, \w \mid \x, \v, \e, \u, h_i) \pr'(h_i \mid  \x, \v, \e, \u) \nonumber \\
        &= \sum_{i=1}^n \pr'(\y, \w \mid \x, \v, \e, \u, h_i) \pr'(h_i) \label{eqn:h-prob} \\
        &= \sum_{i=1}^n \left[\prod_{j=1}^n \pr'(\y^j, \w^j  \mid \x^j, \v^j, \e^j, \u, h_i) \right] \pr'(h_i)  \label{eqn:call-lemma}  \\
        &= \sum_{i=1}^n \left[\prod_{j\neq i} pr'(\y^j, \w^j  \mid \x^j, \v^j, \e^j, \u, h_i) \right] \pr'(\y^i, \w^i  \mid \x^i, \v^i, \e^i, \u, h_i) \pr'(h_i) \label{eqn:cpt} \\
        &= \sum_{i=1}^n \left[\prod_{j\neq i } 1.0 \right] \pr'(\y^i, \w^i  \mid \x^i, \v^i, \e^i, \u) \: w_i \label{eqn:triplet} \\
        &= \sum_{i=1}^n  w_i \: \pr(\y^i_{\x^i}, \w^i_{\v^i}  \mid \e^i, \u) \nonumber \\
        &= L(\u) 
\end{align}
\equationref{eqn:h-prob} follows since the auxiliary root $H$ is d-separated from $\X \cup \V \cup \E \cup \U$.
\equationref{eqn:call-lemma} follows from Lemma \ref{lem:disconnect}  since $\U \cup \{H\}$ decomposes $G$ such that all triplet  models are disconnected.  \equationref{eqn:cpt} follows from the construction of the new CPTs of $\Y^i$ and $\W^i$:  if $H = h_i$, then the original CPTs of $\Y^i$ and $\W^i$ are preserved, and the values of $\Y^j$ and $\W^j$ are fixed to $\y^j$ and $\w^j$ for all $j \neq i$. \equationref{eqn:triplet} follow from the property of the triplet network.

\section{Example for MAP and Reverse-MAP}
\label{app:example-map-rmap}
Consider the simple model in \figureref{fig:two-node}.
We have $\argmax_u \pr(u,v_1) = u_2$ for MAP while $\argmax_u \pr(v_1|u) = u_1$ for R-MAP.

\begin{figure}[h]
\floatconts
{fig:two-node}
{\caption{An example for illustrating the difference between MAP and R-MAP.}}
{
\subfigure{
    \includegraphics[width=3cm]{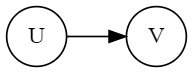}
}\qquad 
\subfigure{
    \begin{tabular}{c|c}
          $U$ & $\theta(A)$ \\
          \hline
          $u_1$ & 0.2 \\
          $u_2$ & 0.8
    \end{tabular}
}
\subfigure{
    \begin{tabular}{c|c|c}
          $U$ & $V$ & $\theta(V|U)$ \\
          \hline
          $u_1$ & $v_1$ & 0.6 \\
          $u_1$ & $v_2$ & 0.4 \\
          $u_2$ & $v_1$ & 0.3 \\
          $u_2$ & $v_2$ & 0.7
    \end{tabular}
}
}
\end{figure}

\section{Proof of \corollaryref{cor:unit-selection-rmap}}
\label{app:poly reductions proof}
We can reduce Reverse-MAP \(\argmax_{\u} \pr(\e_1|\u, \e_2)\)
to unit selection by choosing an objective 
function in the form of \equationref{eqn:objective-function} with the following
settings: \(n=1\), \(w_1=1\), \(\X^1 = \{\}\), 
\(\Y^1 = \E_1\) and \(\E^1 = \E_2\). This is clearly a polynomial-time
reduction. We already showed a reduction from unit selection
to Reverse-MAP in \theoremref{thm:unit-selection-reduction}.
Let \(|G|\) denote the size\footnote{The size of an SCM is the space needed to store the SCM structure and parameters. For example, if the SCM is represented by a functional Bayesian Network, its size is usually the total number of entries in the network CPTs.} of SCM \(G\) and \(n\) be
the number of components in the causal objective function.
By inspecting Definition~\ref{def:objective model}, 
we can immediately see that the time for constructing the objective model $G'$ is $O(n\cdot |G|)$. Moreover, the size
of objective model \(G'\) is also $O(n\cdot |G|)$.

\section{Proof of \theoremref{thm:D-Reverse-MAP complexity}}
\label{app:d-reverse-map proof NP-PP}
Membership in $\NP^\PP$ is immediate. Given an instantiation $\u$ of $\U$, it is easy to verify if $\u$ is a solution by querying the $\PP$-oracle if $\pr(\e_1|\u, \e_2) > p$ which is a problem known as D-MAR~\citep{DarwicheBook09}. To prove hardness, we show that E-MAJSAT~\citep{littman1998computational} can be reduced to D-Reverse-MAP in polynomial time, based on a slight modification of the reduction to classical MAP 
proposed in~\citep{park2002map,jair/ParkD04}.
The E-MAJSAT problem is defined as follows. Given a Boolean formula $\alpha$ over Boolean variables $\Z = \U \cup \V$: Is there an instantiation $\u$ of $\U$ such that the majority of instantiations $\v$ of $\V$
satisfy \(\u\v \models \alpha\) (formula $\alpha$ holds at \(\u\v\))?
We show that we can answer E-MAJSAT by answering
D-Reverse-MAP on an SCM $G_\alpha$ that simulates the formula
$\alpha$ and that can be constructed efficiently.
The SCM \(G_\alpha\) is constructed inductively, as shown in~\citep{jair/ParkD04},\footnote{\citep{jair/ParkD04} intended
to construct a Bayesian network, but their construction is an 
SCM since all internal nodes in the network have functional CPTs.}
and always has a single leaf node, denoted $S_\alpha.$
The construction is based on three rules:
(1)~If $\alpha=X$, then $G_\alpha$ has a 
single binary node $X$ with values \(\{0,1\}\) and a
uniform prior so $S_\alpha = X$;
(2)~If $\alpha = \neg \beta$, then $G_\alpha$ is constructed from $G_\beta$ by adding a binary node $S_\alpha$ as a child of $S_\beta \in G_\beta$ with structural equation $S_\alpha = 1 - S_\beta$; and
(3)~If $\alpha = \beta \land \gamma$ ($\alpha = \beta \lor \gamma$), then $G_\alpha$ is constructed from $G_\beta$ and $G_\gamma$ by adding a binary node $S_\alpha$ as a child of $S_\beta \in G_\beta$ and $S_\gamma \in G_\gamma$ with structural equation $S_\alpha = S_\beta \cdot S_\gamma$ ($S_\alpha = S_\beta + S_\gamma$). We are now ready for the last step of the proof. Given a Boolean formula $\alpha$ over
variables $\Z = \U \cup \V$, and given its SCM
\(G_\alpha\) that has distribution \(\pr\), 
we next show that there is an instantiation $\u$ 
such that 
$\pr(S_\alpha=1|\u) > 1/2$ (D-Reverse-MAP query) iff 
there is an instantiation $\u$ such that the majority of instantiations $\v$ of $\V$ satisfy $\u\v\models \alpha$ (E-MAJSAT query).
Let $|\Z| = n$ and $s_\alpha$ denote $S_\alpha=1$.
By construction of $G_\alpha$~\citep{jair/ParkD04}, we have
$\pr(\z) = 1/2^n$ for all instantiations \(\z\);
$\pr(s_\alpha|\z) = 1$ if $\z \models \alpha$ and $\pr(s_\alpha|\z) = 0$ otherwise. Then $\pr(\z, s_\alpha) = \pr(s_\alpha|\z)\pr(\z) = 1/2^n$ if $\z\models\alpha$ and 
$\pr(\z, s_\alpha) = 0$ otherwise. We finally have:
\[
\pr(s_\alpha|\u) 
=  \frac{\pr(\u, s_\alpha)}{\pr(\u)} 
= \frac{\sum_\v \pr(\u, \v, s_\alpha)}{\sum_\v \pr(\u, \v)} 
= \frac{\sum_{\v:\u\v \models \alpha} (1/2^n)}{\sum_{\v} (1/2^n)}
= \frac{\mathbf{card}(\{\v \in \mathcal{V}: \u\v \models \alpha\})}{\mathbf{card}(\mathcal{V})}
\]
Now that we have shown membership and hardness, 
D-Reverse-MAP is $\NP^\PP$-complete.

\section{Proof of \theoremref{thm:d-reverse-MAP NP}}
\label{app:d-reverse-map proof NP}
Recall \definitionref{def:D-Reverse-MAP} of D-Reverse-MAP:
Is there an instantiation \(\u\) such that
$\pr(\e_1|\u, \e_2) > p$?
Membership in $\NP$ is immediate. Since $\U$ is the set of exogenous variables, evidence variables $\E$ are functionally determined by $\U$. Hence, it is easy to check whether an instantiation $\u$ is a solution by first computing the instantiation \(\e\) of $\E$ implied by $\u$ (using structural equations) and then checking whether \(\e\) is consistent with $\e_1,\e_2$. If the answer is yes, then
$\pr(\e_1|\u, \e_2) = 1$, otherwise 
$\pr(\u, \e_2) = 0$ or $\pr(\e_1|\u, \e_2) = 0$.
To show hardness, we show that SAT can be reduced to D-Reverse-MAP under the conditions stated in the theorem. 
Given a Boolean formula $\alpha$ over variables $\U$, we construct an SCM $G_\alpha$ as in the proof of 
\theoremref{thm:D-Reverse-MAP complexity}.
SCM \(G_\alpha\) has a single leaf node $S_\alpha$ and its 
root nodes are $\U$. 
By construction of $G_\alpha$, we have $\pr(S_\alpha=1|\u) = 1$ 
if $\u$ satisfies $\alpha$ and $\pr(S_\alpha=1|\u) = 0$ otherwise.
By choosing \(p=0\), \(\e_1 = \{S_\alpha=1\}\) and \(\e_2 = \emptyset\), 
the D-Reverse-MAP query (is there $\u$ such that $\pr(S_\alpha=1|\u) > 0$) answers yes
iff there is an instantiation $\u$ that satisfies the formula 
$\alpha$ (SAT query). This concludes our proof.

\section{Review of Elimination Concepts}
\label{app:review}
We review here the standard notions of elimination process, clusters and moral graphs which we use in some of the upcoming proofs; see~\citep[Ch~9]{DarwicheBook09} for a detailed treatment. 

The \textit{moral graph} of an SCM $G$ is obtained from $G$ by adding an undirected edge between every pair of common parents and then undirecting all edges. 
\textit{Eliminating} a variable $X$ from a graph $G$ is done by connecting every pair of neighbors for $X$ in $G$, and then removing node $X$ from $G$.
Eliminating variables from an SCM \(G\) is done by
eliminating variables from its moral graph \(G'\).
Eliminating variables from a moral graph \(G'\)
using variable order \(\pi\) induces a \textit{graph sequence} \(G'=G_1, \ldots, G_n\) where graph \(G_{i+1}\) is obtained 
by eliminating variable \(\pi(i)\) from \(G_i\).
We use \(G_i(X)\) to denote \(X\) and its neighbors in graph \(G_i\).
We also use \(\C(X)\) to denote the \textit{cluster} of variable \(X\), which is $X$ and its neighbors just before eliminating \(X\). If $X = \pi(i)$, then $\C(X) = G_i(X)$.
We also use \(\C_i\) to denote the cluster for \(X\), $\C(X)$,
in this case.

\section{Proof of \lemmaref{lem:add-node}}
\label{app:lemma-add-node}
This proof uses the elimination concepts and notations reviewed in Appendix~\ref{app:review}.

Let $G'_m$ be the moral graph of $G'$ and $\C'_i$ be the cluster induces by eliminating variable $X_i$ from $G'$. Let $n$ be the number of variables in $G$. To prove 
Lemma~\ref{lem:add-node}, it suffices to prove the following statement:
$\C'_i \subseteq \C_i \cup \{H\}$ for $i = 1, \ldots, n,$
which we prove next by induction.

Let $\nei(X)$ denote the neighbors of $X$ in $G_m$ and let $\nei'(X)$ denote the neighbors of $X$ in $G'_m$. Let $\Z$ denote the children of $H$ in $G'$. For each $Z \in \Z$, let $\P_Z$ denote the parents of $Z$ in $G$. 
First, we show the statement holds when $i=1$. When creating $G'_m$ from $G'$, the introduction of node $H$ would cause two classes of edges that do not exist in $G_m$ to be added to $G'_m$: 1) $(Z, H)$ for $Z \in \Z$; 2) $(Y, H)$ if $Y$ is a parent of some node $Z \in \Z$, that is $Y$ and $H$ are common parents of some node $Z$. This means that before the elimination starts, for any node $X$, if $X \in \Z \cup_{Z \in \Z} P_Z$, we have $\nei'(X) = \nei(X) \cup \{H\}$; otherwise $\nei'(X) = \nei(X)$. 
Hence, $C'_1 \subseteq \{C_1\} \cup \{H\}$.
Consider now the elimination of $X_{i+1}$ assume that the 
statement holds for $1,2,\ldots,i$.
We observe that if $\C'_j \subseteq \C_j \cup \{H\}$, then the elimination of $X_j$ would cause only one type of additional edges be added to $G'_m$, that is $(Y, H)$ for $Y \in \nei(X_j)$. This is because the elimination of $X_j$ will form a clique among $\nei(X_j) \cup \{H\}$ in $G'_m$, but $\nei(X_j)$ already forms a clique after $X_j$ is eliminated from $G_m$. This implies that eliminating $X_j$ ($j \leq i$) will never cause any additional edge to be added among any two nodes that are both not $H$ (in other words, all additional edges added are incident on $H$). Thus, before we eliminate $X_{i+1}$, we have $\nei'(X_{i+1}) \subseteq \nei(X_{i+1}) \cup \{H\}$ and this implies $\C'_{i+1} \cap \C_{i+1} \cup \{H\}$ which concludes the proof.

\section{Proof of \theoremref{thm:objective-width}}
\label{app:objective width}
An \(N\)-world model is obtained by creating \(N\)
copies of a directed acyclic graph (DAG) while joining
them so a {\em subset} of their roots are shared~\citep{han2022on}.
Hence, an objective model as in
\definitionref{def:objective model} corresponds to an 
\(N\)-world model except for the addition of mixture node \(H\) 
and its outgoing edges. Before adding the mixture node \(H\),
an objective model with \(n\)
components corresponds to a \(3n\)-world model so
its treewidth is \(\leq 3n(w+1)-1\) by Theorem~\ref{thm:n-world}. 
After adding node \(H\), its treewidth is \(\leq 3n(w+1)\) by 
Lemma~\ref{lem:one node}.

\section{Proof of \theoremref{thm:constrained width bound}}
\label{app:constrained-tw}
This proof uses the elimination concepts and notations reviewed in Appendix~\ref{app:review}.

For a node \(X\) in an SCM $G$, we use $\dup{X}{k}$ to denote its
\(k^{th}\) duplicate in an $n$-world model of \(G\). If node \(X\)
is shared between all $n$ worlds, then $\dup{X}{k} = X$ for all $k$.
For a set of variables $\X$, we use $\dup{\X}{k}$ to denote 
$\{\dup{X}{k}: X \in \X\}$.

Let $G$ be an SCM, \(\U\) be a subset of its roots (unit variables) and let $G'$ be a corresponding objective model with $n$ components. Our proof is based on constructing an augmented objective model \(G''\) by adding edges to \(G'\) and then
showing that the bounds of Theorem~\ref{thm:constrained width bound} hold for \(G''\). Our proof is based on Lemmas~\ref{lem:constrained-width1} and~\ref{lem:constrained-width2} which we
formally state and prove later: 
\begin{itemize}
\item[--]Lemma~\ref{lem:constrained-width1} complements 
Theorem~\ref{thm:n-world} by showing that any \(\U\)-constrained elimination order
for an SCM can be converted into a \(\U\)-constrained elimination order for
a corresponding $n$-world model while preserving the width of the order.
\item[--]Lemma~\ref{lem:constrained-width2} concerns the
augmentation of an SCM by a root
node \(H\) and some edges that originate from \(H\). 
In particular, given a \(\U\)-constrained elimination order
of width \(w\) for the SCM, the lemma shows how to construct
a \(\U\)-constrained elimination order for its augmentation
with width \(\leq \max(w+1,|\U|)\).
\end{itemize}

We start by showing how to construct the augmented objective model
\(G''\) from \(G'\).
Let \(H\) be the mixture node of \(G'\) and
$\Z = \{\Y^i, \W^i\}_{i=1}^n$ be the set of all outcome variables in the objective function of Equation~\eqref{eqn:objective-function}. 
We obtain \(G''\) by adding to \(G'\)
an edge \(H \rightarrow Z\) for each \(Z \in \Z\) if such
an edge does not already exist in \(G'\).
The edges of $G''$ are a superset of the edges of $G'$ so it suffices to show that the bounds of Theorem~\ref{thm:constrained width bound} hold 
for \(G''\). 
We will next use \(G^t\) to denote a triplet ($3$-world) model
of \(G\). We will also use \(G^b\) to denote the augmentation 
of \(G^t\)
with mixture node \(H\) and edges \(H \rightarrow Z\) for
\(Z \in \Z\). Note that the augmented objective model \(G''\)
corresponds to \(n\) copies of \(G^b\) that share root 
nodes \(\U \cup \{H\}\). Hence, \(G''\) is an $n$-world model
of \(G^b\).
 
Let \(\pi\) be a \(\U\)-constrained elimination order for \(G\) with width \(w\).
Since $G^t$ is a triplet ($3$-world) model of \(G\), Theorem~\ref{thm:n-world}
tells us that there exists an elimination order \(\pi^t\) of \(G^t\) with width \(w^t\) such that $w^t \leq 3w +2$ (order \(\pi^t\) will also be \(\U\)-constrained). 
Recall that $G^b$ is obtained from \(G^t\)
by adding a root node $H$ and some edges that emanate from \(H\). 
By Lemma~\ref{lem:constrained-width2}, there exists an elimination
order \(\pi^b\) for \(G^b\) with width \(w^b\) such that $w^b \leq \max(w^t+1, |\U|) = \max(3w+3, |\U|)$. Moreover, if the objective function has a single outcome variable $Y$, then $H$ has a single child $Y$ in $G^b$ so, also by Lemma~\ref{lem:constrained-width2}, we have $w^b \leq 3w + 3$.
Since $G''$ is an $n$-world model of $G^b$ based on roots $\U\cup\{H\}$ of \(G^b\), we have $w'' = w^b$ by Lemma~\ref{lem:constrained-width1}. In summary, we have $w' \leq w'' \leq \max(3w + 3, |\U|)$. 
If the objective function has a single outcome variable, 
we have $w' \leq 3w + 3$. This concludes the
proof of Theorem~\ref{thm:constrained width bound}.

We will next formally state and prove Lemmas~\ref{lem:constrained-width1} and~\ref{lem:constrained-width2} which we used in the above proof. 
\begin{lemma}
\label{lem:constrained-width1}
Consider an SCM $G$, a subset \(\U\) of its roots, and a corresponding $n$-world model $G'$ for \(G\) that shares \(\U\).
If $w$ is the width of a $\U$-constrained elimination order \(\pi\) for $G$, and 
$w'$ is the width of the corresponding $\U$-constrained elimination order \(\pi'\) for $G'$, then \(w=w'\). 
\end{lemma}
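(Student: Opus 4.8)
The plan is to compare the clusters that $\pi$ induces in $G$ with those that $\pi'$ induces in $G'$ and show they match up in size one-for-one, so that the maximum cluster sizes — and hence the widths $w$ and $w'$ — coincide. First I would pin down the moral graph $G'_m$ of the $n$-world model. Since the only nodes shared across the $n$ copies are the roots $\U$, every endogenous node $Z^k$ in world $k$ has all of its non-unit parents inside world $k$, with its remaining parents lying in $\U$. Moralization therefore only adds edges among world-$k$ non-unit variables, among $\U$-variables, and between the two; it never links non-unit variables of distinct worlds. Consequently $G'_m$ is exactly $n$ copies of $G_m$ (on the non-unit variables) glued together along the shared subgraph induced by $\U$. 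The feature that makes the width collapse back to $w$, rather than blow up by a factor of $n$ as in \theoremref{thm:n-world}, is that $\pi$ (and hence $\pi'$) is $\U$-constrained, so every $\U$-variable is eliminated after all non-unit variables.

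The main tool I would invoke is the standard fill-in path characterization of clusters: in an elimination order, $\C(X)=\{X\}\cup\{Y\succ X : \text{there is a } G_m\text{-path } X\!\to\! Y \text{ with all internal nodes} \prec X\}$. Applying this to a non-unit variable $X$ and its copy $X^k$, I claim $\C'(X^k)$ contains no variable of a world $j\neq k$: any $G'_m$-path leaving world $k$ must pass through a node of $\U$, but such a node is eliminated last and hence succeeds $X^k$, so it cannot be an internal node of a cluster-witnessing path for $X^k$. Thus every witnessing path stays inside world $k$, together with possibly a terminal $\U$-endpoint, and these paths are in bijection with the $G_m$-paths witnessing membership in $\C(X)$ (non-unit $Z\leftrightarrow Z^k$, and each $U\in\U$ kept fixed). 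Since $\pi'$ preserves the relative order of the base variables and of the $\U$-block, the ordering conditions transfer verbatim, giving $|\C'(X^k)|=|\C(X)|$ for every world $k$.

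The one case needing more care — and the step I expect to be the main obstacle — is the cluster of a unit variable $U\in\U$, because a path between two $\U$-nodes in $G'_m$ may wander through several worlds. Here I would decompose any such path at the $\U$-nodes it visits: each maximal segment between consecutive $\U$-nodes runs through the non-unit part of a single world and, via the isomorphism of that world's copy with $G_m$, projects to a segment in $G_m$. Concatenating the projected segments yields a $G_m$-path between the same two $\U$-nodes whose internal nodes obey the same ordering constraints (the internal $\U$-nodes keep their relative order, and the non-unit internal nodes precede $U$ in both orders), which gives $\C'(U)=\C(U)$. Once this segment-decomposition bookkeeping is in place, every cluster of $G'$ is matched in size to a cluster of $G$ and conversely, so the two orders attain equal maximum cluster size and therefore $w=w'$; note the equality (not merely an inequality) is forced by the exact bijection of clusters.
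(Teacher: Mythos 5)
Your proof is correct, but it takes a genuinely different route from the paper's. The paper proves the lemma by induction on the elimination steps, maintaining two invariants about the intermediate graph sequence: the neighborhood of a duplicate $X^k$ ($X\notin\U$) in $G'_i$ is exactly the world-$k$ copy of the neighborhood of $X$ in $G_i$, and the neighborhood of a shared node $U\in\U$ is the union of the copies of its base neighborhood over all worlds; the $\U$-constrained order then makes the union collapse because every non-shared node is already gone when $U$ is eliminated. You instead work statically, invoking the fill-in path characterization of clusters (a node $Y\succ X$ lies in $\C(X)$ iff the moral graph has an $X$--$Y$ path whose internal nodes all precede $X$), observing that the moral graph of $G'$ is $n$ copies of $G_m$ glued along $\U$, and noting that $\U$-nodes can never serve as internal nodes of a witnessing path for a non-unit variable precisely because the order is $\U$-constrained; the segment decomposition at $\U$-nodes handles the unit-variable clusters. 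Both arguments establish the same two structural facts (duplicate clusters are world-local copies; shared-node clusters coincide with the base clusters), so the choice is one of technique: your path-based argument is shorter and makes the role of the $\U$-constraint more transparent, but it leans on the Rose--Tarjan--Lueker characterization, which the paper does not state (though its Lemma~\ref{lem:elimination-neighbor} is a special case of it), whereas the paper's induction is self-contained given its Appendix~\ref{app:review}. If you keep your version, you should state and cite (or prove) the fill-in characterization explicitly.
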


Given an elimination order $\pi$ for an SCM $G$, we can convert it into a corresponding elimination order $\pi'$ for its $n$-world model $G'$ (referenced in the above lemma) by replacing each variable $X \notin \U$ in $\pi$ with its duplicates $\dup{X}{1}, \dup{X}{2}, \ldots, \dup{X}{n}$, as in Definition~2 in~\citep{han2022on}. If $\pi$ is $\U$-constrained, then $\pi'$ will also be $\U$-constrained. Moreover, we define a graph sequence for the $n$-world model $G'_1, G'_2, \ldots, G'_n$ where $G'_1$ is the moral graph of $G'$, and $G'_{i+1}$ is obtained by eliminating all duplicates of variable $\pi(i)$, i.e. $\dup{\pi(i)}{1}, \dup{\pi(i)}{2}, \ldots, \dup{\pi(i)}{n}$, from $G'_i$. \\

\begin{proof}
Suppose we eliminate variables from $G/G'$ using orders $\pi/\pi'$. 
We claim that at every elimination step $i$, the following properties hold:
\begin{enumerate}
\item[A)] For each node $X \notin \U$, $G'_i(\dup{X}{k}) = \dup{G_i(X)}{k}$
\item[B)] For each node $U \in \U$, $G'_i(U) = \union_{k=1}^{n} \dup{G_i(U)}{k}$
\end{enumerate}

We next show that properties A), B) imply $w' = w$ and then prove these properties. Let $Y = \pi(i)$.
If $Y \notin \U$, then when its duplicate $\dup{Y}{k}$ is eliminated from $G'$, we have $\C'(\dup{Y}{k}) = \dup{\C(Y)}{k}$. If $Y \in \U$, then when $Y$ is eliminated from $G'$, we have $\C'(Y) =  \union_{k=1}^{n} \dup{G_i(Y)}{k} = G_i(Y) = \C(Y)$ since all non-shared nodes have been eliminated before $Y$, i.e. $\dup{G_i(Y)}{k} = G_i(Y)$. This means that the cluster induced by eliminating a variable from $G'$ always has the same size as the cluster induced by eliminating the corresponding variable from $G$, which implies $w'=w$.

We next prove properties A), B) by induction.
By definition of an $n$-world model, these properties hold initially for $G'_1$. 
Suppose they hold for $G'_i$ and consider $G'_{i+1}$. 
Let $Y = \pi(i)$. Then $G'_{i+1}$ is the result of eliminating nodes $\dup{Y}{1},\ldots,\dup{Y}{n}$ from $G'_i$.  We consider two cases.

\noindent {\bf Case:} $Y \notin \U$. Consider each node $Z$ in $G'_{i}$. 
If $Z$ is not a neighbor of $\dup{Y}{1},\ldots,\dup{Y}{n}$ in $G'_i$, then $G'_{i+1}(Z)$ will not be affected by the elimination of $\dup{Y}{1},\ldots,\dup{Y}{n}$ and the properties hold by the induction hypothesis. Otherwise, node $Z$ falls into two cases: A) a duplicate $[X]^k$ of a node $X \notin \U$ B) a shared node $U \in \U$.
\begin{enumerate}
\item[A)] by the induction hypothesis, neighbors of $\dup{X}{k}$ in $G'_i$ must belong to the $k$-th world, so $\dup{X}{k}$ can only be a neighbor of the $k$-th duplicate $\dup{Y}{k}$. This means that $G'_{i+1}(\dup{X}{k})$ can only be affected by the elimination of $\dup{Y}{k}$. By definition of variable elimination, we have:
    \begin{align*}
    G'_{i+1}(\dup{X}{k}) &= G'_i(\dup{X}{k}) \cup G'_i(\dup{Y}{k}) \setminus \{\dup{Y}{k}\} \\
                &= \dup{G_i(X)}{k} \cup \dup{G_i(Y)}{k} \setminus \{\dup{Y}{k}\}   &\text{by the induction hypothesis}\\
                &= \dup{G_i(X) \cup G_i(Y) \setminus \{Y\}}{k} \\
                &= \dup{G_{i+1}(X)}{k} &\text{by definition of variable elimination}
    \end{align*}
This proves property A).

\item[B)] by the induction hypothesis, $U$ must be a neighbor of all duplicates $\dup{Y}{1},\ldots,\dup{Y}{n}$. We have:
    \begin{align*}
    G'_{i+1}(U) &= G'_{i}(U) \union_{k=1}^{n} G'_{i}(\dup{Y}{k}) \setminus \{ \dup{Y}{k}\}_{k=1}^{n} \\
                &= \big(\union_{k=1}^{n} \dup{G_i(U)}{k} \big) \big(\union_{k=1}^{n} \dup{G_{i}(Y)}{k} \big)\setminus \{ \dup{Y}{k} \}_{k=1}^{n} &\text{by the induction hypothesis} \\
                &= \union_{k=1}^{n}  \dup{G_i(U)}{k} \cup \dup{G_{i}(Y)}{k} \setminus \{\dup{Y}{k}\} \\
                &= \union_{k=1}^{n} \dup{G_i(U) \cup G_{i}(Y) \setminus \{Y\}}{k}  \\
                &= \union_{k=1}^{n} \dup{G_{i+1}(U)}{k} &\text{by definition of variable elimination}
    \end{align*}
This proves property B).
\end{enumerate}

\noindent {\bf Case:} $Y \in \U$. In this case, $G'_{i}$ only contains nodes in $\U$. Property A) holds trivially. And the relation in property B) reduces to $G'_i(U) = \union_{k=1}^{n} \dup{G_i(U)}{k} = G_{i}(U)$. By the induction hypothesis, we know $G'_i = G_{i}$ and thus $G'_{i+1} = G_{i+1}$. Property B) holds.
This concludes the proof.
\end{proof}

The proof of Lemma \ref{lem:constrained-width2} requires the following result on eliminating
variables from graphs.

\begin{lemma}
\label{lem:elimination-neighbor}
Consider a DAG $G$, a subset $\U$ of its nodes, and a node $H$ in \(G\) where $H \notin \U$. Let $G_1$ be the moral gragh of $G$, and $G_2$ be the result of eliminating all nodes other than $\{H\} \cup \U$ from $G_1$. For any node $X \in \U$, $X$ is adjacent to $H$ in $G_2$ if and only if there exists a path between $X$ and $H$ in $G_1$ that does not include a node in $\U \setminus \{X\}$.
\end{lemma}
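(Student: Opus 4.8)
The plan is to recognize this lemma as an instance of the standard characterization of fill-in edges produced by variable elimination: when a set of nodes is eliminated from an undirected graph, two surviving nodes end up adjacent precisely when the original graph contains a path between them whose internal vertices were all eliminated. First I would fix notation by letting $\mathcal{S}$ denote the set of eliminated nodes, i.e. all nodes of $G_1$ other than $\{H\}\cup\U$, so that $G_2$ is obtained from the moral graph $G_1$ by eliminating exactly $\mathcal{S}$ (using the graph-sequence process reviewed in Appendix~\ref{app:review}). Observe that for $X\in\U$, a \emph{simple} path between $X$ and $H$ in $G_1$ has $X$ and $H$ as its only non-internal vertices; hence the stated condition that ``the path does not include a node in $\U\setminus\{X\}$'' is equivalent to requiring every internal vertex of the path to lie in $\mathcal{S}$. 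Thus it suffices to prove that $X$ and $H$ are adjacent in $G_2$ iff $G_1$ contains a path from $X$ to $H$ all of whose internal vertices lie in $\mathcal{S}$. Since any non-simple witnessing path can be shortened to a simple one with the same endpoints and internal vertices still in $\mathcal{S}$, I may freely assume paths are simple.

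For the ``if'' direction I would induct on the number of internal vertices of a witnessing path $X=v_0,v_1,\dots,v_m,v_{m+1}=H$ with $v_1,\dots,v_m\in\mathcal{S}$. The base case $m=0$ is immediate, since then $(X,H)$ is already an edge of $G_1$ and both endpoints survive into $G_2$. For the inductive step, let $v_j$ be whichever of $v_1,\dots,v_m$ is eliminated first. At the moment $v_j$ is eliminated its path-neighbors $v_{j-1}$ and $v_{j+1}$ are still present and adjacent to $v_j$, so eliminating $v_j$ connects them; this yields a shorter path from $X$ to $H$ with one fewer internal vertex, all of which still lie in $\mathcal{S}$, and the claim follows by the induction hypothesis.

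For the ``only if'' direction I would maintain, along the graph sequence induced by the elimination order, the invariant that every edge present among the surviving nodes after step $i$ is witnessed by a path in $G_1$ whose internal vertices are all among the already-eliminated nodes (hence in $\mathcal{S}$). The invariant holds initially because edges of $G_1$ are witnessed by length-one paths. When a node $w\in\mathcal{S}$ is eliminated, any new edge it creates joins two of its current neighbors $a,b$; by the invariant there are witnessing paths $a\leadsto w$ and $w\leadsto b$ with internal vertices already eliminated, and concatenating them (then extracting a simple subpath) produces a witnessing path $a\leadsto b$ whose internal vertices, now including $w$, all lie in $\mathcal{S}$. Applying the invariant to the final graph $G_2$ and the pair $(X,H)$ gives the desired path.

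I expect the main obstacle to be the bookkeeping in the ``only if'' direction: I must verify the invariant is preserved exactly at each elimination step, taking care that the newly eliminated node $w$ is itself in $\mathcal{S}$ (which holds because only nodes outside $\{H\}\cup\U$ are ever eliminated) and that concatenating two witnessing paths and reducing to a simple subpath keeps every internal vertex within $\mathcal{S}$. A minor secondary point is confirming that order-independence of fill-in is not actually needed here: since $G_2$ is \emph{the} result of eliminating $\mathcal{S}$, I can fix the particular order used to define $G_2$ and carry both inductions along its graph sequence, so no appeal to order-invariance is required.
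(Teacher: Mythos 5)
Your proof is correct, and the ``if'' direction is essentially the paper's argument (repeatedly collapse internal vertices of a witnessing path as they are eliminated), just organized as an explicit induction on the number of internal vertices. Where you genuinely diverge is the ``only if'' direction. The paper argues by contraposition and case analysis: if every $X$--$H$ path passes through some $U \in \U\setminus\{X\}$, it asserts that eliminating the non-$\U$ nodes ``from such paths'' yields $X$--$U$--$H$ rather than a direct edge. That sketch leaves implicit the key fact that \emph{every} fill-in edge between two surviving nodes must be witnessed by some path of already-eliminated vertices --- which is precisely the invariant you maintain along the graph sequence (each edge among survivors after step $i$ is witnessed by a path in $G_1$ whose internal vertices are all already eliminated, preserved under concatenation and simplification when a node is removed). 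Your direct invariant argument is therefore the more rigorous of the two and supplies the step the paper glosses over. One small point of bookkeeping on your side: in the ``if'' induction, the shorter path obtained after eliminating $v_j$ lives in the intermediate graph, not in $G_1$, so the induction hypothesis should be stated over the graph sequence (a path whose surviving internal vertices all lie in $\mathcal{S}$ in the current graph $G_i$ implies adjacency in $G_2$) rather than over paths in $G_1$; you flag this yourself in your final paragraph, and the fix is routine. Both arguments rely on the same underlying facts --- edges between surviving nodes are never deleted, and elimination only adds edges among the neighbors of the eliminated node --- so the two proofs establish the same standard fill-in characterization, with yours being the tighter write-up of the converse direction.
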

\begin{proof}
We first prove the if direction. Suppose there exists such a path $(X,\ldots, Z_1, Y, Z_2, \ldots, H)$ in $G_1$. Eliminating node $Y$ from $G_1$ will lead to a path $(X,\ldots, Z_1,Z_2, \ldots, H)$. Since  nodes in $\U \setminus \{X\}$ cannot appear along this path, eliminating all nodes other than $\{H\} \cup \U$ will lead to the edge $(X, H)$ in $G_2$. We next prove the only-if direction by contraposition. Suppose there is no path between $X$ and $H$ in $G_1$ that does not include a node in $\U \setminus \{X\}$. There are two cases: 1) there is no path between $X$ and $H$; 2) every path between $X$ and $H$ includes at least one node $U \in U \setminus \{X\}$, which has the form $(X,\ldots, U, \ldots, H)$. In the first case, $X$ and $H$ will be disconnected in $G_2$. In the second case, eliminating all nodes other than $\{H\} \cup \U$ from such paths will lead to $X \xrightarrow{} U \xrightarrow{} H$, so $X$ cannot be directly adjacent to $H$ in $G_2$. This concludes the proof.
\end{proof}

\begin{lemma}
\label{lem:constrained-width2}
Consider an SCM $G$ and a subset $\U$ of its roots. Suppose SCM 
$G'$ is obtained from $G$ by adding a root node $H$ as a parent of some nodes $\Z$ in $G$ where $\Z\cap\U=\emptyset$. Let $\pi$ be a $\U$-constrained elimination order for $G$, and let $\pi'$ be a $\U$-constrained elimination order of $G'$ obtained from $\pi$ by placing $H$ just before variables $\U$. If $\pi$ has width $w$ and $\pi'$ has width $w'$, then $w' \leq \max(w+1, |\U|)$. Moreover, if $H$ has a single child in $G'$, then $w' = w+1$.
\end{lemma}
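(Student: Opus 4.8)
The plan is to analyze the elimination of $G'$ under $\pi'$ in three phases that mirror the structure of $\pi'$: first the non-unit variables of $G$ (eliminated in the same relative order as in $\pi$), then the mixture node $H$, and finally the unit variables $\U$. For the first phase I would reuse the technique behind \lemmaref{lem:add-node}: since $H$ is a root whose addition only moralizes the families of its children $\Z$, every fill-in edge created while $H$ is still present is incident on $H$, which yields the invariant $\C'(X) \subseteq \C(X) \cup \{H\}$ for each non-unit variable $X$. Hence these clusters have size at most $w+2$ and contribute width at most $w+1$. In the second phase $H$ is eliminated after all non-unit variables are gone, so $\C'(H) \subseteq \{H\} \cup \U$, contributing width at most $|\U|$. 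In the third phase the surviving graph is over $\U$ alone, so every cluster is a subset of $\U$ and contributes width at most $|\U|-1$. Taking the maximum over the three phases gives the general bound $w' \le \max(w+1,|\U|)$.

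The refinement for the single-child case (where $H$ has one child $Z$) hinges on controlling $\nei(H)$, and I expect this to be the main obstacle since a priori $H$ could accumulate up to $|\U|$ neighbours. The key is a clique invariant for $H$. While $Z$ is present, I would show $\nei(H) \subseteq \{Z\}\cup\nei(Z)$: initially the neighbours of $H$ are $Z$ together with the co-parents of $Z$, all of which are neighbours of $Z$, and this containment is preserved under each elimination step. The moment $Z$ is eliminated, $H$ inherits the remaining neighbours of $Z$, giving $\nei(H) \subseteq \C(Z)\setminus\{Z\}$, a clique of size at most $w$. I would then prove by induction on the subsequent eliminations that $\{H\}\cup\nei(H)$ remains a clique of size at most $w+1$: eliminating a non-neighbour of $H$ leaves its neighbourhood untouched, whereas eliminating a neighbour $Y\in\nei(H)$ replaces $\nei(H)$ by $\C'(Y)\setminus\{Y,H\}$, whose size is at most $|\C(Y)|-1\le w$ and which again forms a clique together with $H$.

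With this invariant in hand the last two phases collapse to width $w$. When $H$ is eliminated, its cluster $\{H\}\cup\nei(H)$ has size at most $w+1$; and because $\nei(H)$ is already a clique over $\U$, removing $H$ introduces no new edges among the unit variables. Consequently the graph entering the third phase is exactly the graph over $\U$ produced by the original elimination of $G$, so the unit-variable clusters coincide with those induced by $\pi$ and have size at most $w+1$. Combining the three phases gives $w' \le w+1$. For the matching direction I would observe that $Z$ is eliminated while $H$ is still present and $H$ is a neighbour of $Z$, so $H\in\C'(Z)$ and in fact $\C'(Z)=\C(Z)\cup\{H\}$; this is the cluster at which the width is forced up by one relative to $G$, delivering $w'=w+1$.
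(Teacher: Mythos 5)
Your argument for the two inequalities is sound and, for the general bound $w' \le \max(w+1,|\U|)$, essentially matches the paper's: the invariant $\C'(X)\subseteq\C(X)\cup\{H\}$ during the elimination of non-unit variables, followed by $\C'(H)\subseteq\{H\}\cup\U$ and clusters inside $\U$ afterwards. For the single-child refinement you take a genuinely different route. The paper works statically: it uses \lemmaref{lem:elimination-neighbor} (a path characterization of adjacency after elimination) to show that, once all non-unit variables are gone, any two neighbors of $H$ are already adjacent because every path reaching $H$ must pass through its unique child $Z$; hence eliminating $H$ creates no fill-in among $\U$ and the third phase reproduces the clusters of $\pi$. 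You instead maintain a dynamic invariant — $\{H\}\cup\nei(H)$ stays a clique of bounded size throughout the elimination — which reaches the same two conclusions (no fill-in among $\U$ when $H$ goes, and $|\C'(H)|\le w+1$) without the path lemma, and has the side benefit of bounding $H$'s cluster explicitly, something the paper leaves implicit. Both yield $w'\le w+1$.

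The genuine gap is in your closing sentence, the lower bound $w'\ge w+1$ needed for the stated equality $w'=w+1$. From $\C'(Z)=\C(Z)\cup\{H\}$ you only get $w'\ge|\C(Z)|$, and $\C(Z)$ need not be a maximum cluster of $\pi$: if the width of $\pi$ is attained at a cluster that never becomes adjacent to $H$ (e.g., a large family elsewhere in the DAG, with $Z$ hanging off a low-width fringe), then every cluster of $\pi'$ can still have size at most $w+1$ and $w'=w$. So "this is the cluster at which the width is forced up by one" does not follow. To be fair, the paper's own write-up also asserts the equality without establishing the lower bound, and only the direction $w'\le w+1$ is used downstream (in the proof of \theoremref{thm:constrained width bound}); but as a proof of the lemma as stated, this step does not go through and should either be repaired with an additional hypothesis tying $\C(Z)$ to the width of $\pi$, or the conclusion weakened to $w'\le w+1$.
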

\begin{proof}
Let $\X$ denote variables other than $\U$ in $G$, and let $\U' = \U\cup\{H\}$.
Suppose we first eliminate variables $\X$, then $H$, and finally $\U$ from $G'$ using order \(\pi'\). This results in a graph sequence $G'_1, \ldots, G'_{j}, G'_{j+1/2}, G'_{j+1}, \ldots, G'_{j+k}$ where $j = |\X|$ and $k = |\U|$. Here, $G'_{j+1/2}$ is obtained by eliminating all variables $\X$ from $G'_1$, and $G'_{j+1}$ is obtained by eliminating $H$ from $G'_{j+1/2}$.
We claim:
\begin{itemize}
    \item if $i \leq j$, then for each node $X \neq H$ in $G'_i$, we have $G'_i(X) \subseteq G_i(X) \cup \{H\}$.
    \item if $i > j$, then for each node $X \neq H$ in $G'_i$, we have $G'_i(X) \subseteq \U'$. Moreover, if $H$ has a single child in $G'$, then $G'_i(X) = G_i(X)$.
\end{itemize}
We first show that the above claim implies the lemma, and then follow by proving the claim.
Suppose we are eliminating variable \(Y\) from \(G'\).
If $Y \notin \U'$ then \(i \leq j\) 
and the above claim implies $\C'(Y) \subseteq \C(Y)\cup\{H\}$.
If $Y \in \U'$ then \(i > j\) and the above claim implies $\C'(Y) \subseteq \U'$, and $\C'(Y) = \C(Y)$ when $H$ has a single child.
This guarantees the statement of the lemma:
$w' \leq \max(w+1, |\U|)$, and $w' = w+1$ if $H$ has a single child in $G'$.

We next prove our claim by induction. Let $\Z$ denote the children of $H$ in $G'$. When constructing the moral graph $G'_1$ from $G'$, the introduction of node $H$ causes two classes of edges that do not exist in $G_1$ to be added to $G'_1$: $(Z, H)$ for $Z \in \Z$, and $(Y, H)$ if $Y$ is a parent of some node $Z \in \Z$, that is $Y$ and $H$ are common parents of some node $Z$. All of these extra edges are incident on $H$, meaning that for any node $X$ in $G'_1$, $G'_1(X) \subseteq G_1(X) \cup \{H\}$. Thus, our claim holds for $G'_1$. Next, assume our claim holds for $G'_i$ (induction hypothesis) and consider $G'_{i+1}$.
We have two cases.

\noindent {\bf Case:} $i \leq j$. 
Let $Y = \pi(i)$. Consider each node $X$ in $G'_{i+1}$. If node $X$ is not a neighbor of $Y$ in $G_i/G'_i$, then $X$ is not affected by the elimination of $Y$, i.e., $G'_{i+1}(X) = G'_i(X)$ and $G_{i+1}(X) = G_i(X)$. So the claim holds by the induction hypothesis. Otherwise, we can bound $G'_{i+1}(X)$ as follows:
\begin{align*}
    G'_{i+1}(X) &= G'_i(X) \cup G'_i(Y) \setminus \{Y\} &\text{by the definition of  elimination} \\
                &\subseteq \big(G_i(X) \cup \{H\}\big) \cup  \big(G_i(Y) \cup \{H\}\big)  \setminus \{Y\} &\text{ by the induction hypothesis}  \\
                &\subseteq \big(G_i(X) \cup  G_i(Y) \setminus \{Y\}\big) \cup \{H\} \\
                &\subseteq G_{i+1}(X) \cup \{H\}          
\end{align*}

\noindent {\bf Case:}
$i > j$. For this case, $G'_{i}$ only contains nodes in $\U'$. It is trivial that $G'_i(X) \subseteq \U'$ for each node $X$ in $G'_{i}$. Recall that eliminating $H$ from $G'_{j+1/2}$ results in $G'_{j+1}$. By the induction hypothesis, all extra edges in $G'_{j+1/2}$ that do not exist in $G_{j+1}$ must be incident on $H$. Consider the special case where $H$ has a single child in $G'$. We claim that in this case, every two nodes in $G'_{j+1/2}(H)$ are adjacent in $G'_{j+1/2}$, meaning that the neighbors of $H$ already forms a clique in $G'_{j+1/2}$. Thus, eliminating $H$ from $G'_{j+1/2}$ will not add any fill-in edges in $G'_{j+1}$. This guarantees $G'_{j+1} = G_{j+1}$, i.e, $G'_i(X) = G_i(X)$ for all $i >= j+1$. We finally turn to proving this claim by contradiction. Suppose that node $U_1$ and $U_2$ are neighbors of $H$ in $G'_{j+1/2}$ but are not adjacent in $G'_{j+1/2}$. By Lemma~\ref{lem:elimination-neighbor}, in $G'_1$, there must be a path $P_1$ between $U_1$ and $H$ that does not include nodes in $\U \setminus \{U_1\}$, and a path $P_2$ between $U_2$ that does not include nodes in $\U \setminus \{U_2\}$. Since $H$ is a root and only has one child $Z$ in $G'$, $P_1$ must have the form $(U_1,\ldots, Z, H)$ in $G'_1$ and $P_2$ must have the form $(U_2,\ldots, Z, H)$ in $G'_1$. Thus, there must be a path $(U_1, \ldots, Z, \ldots, U_2)$ in $G'_1$ that does not contain nodes in $\U' \setminus \{U_1, U_2\}$. By Lemma~\ref{lem:elimination-neighbor}, after eliminating all nodes other than $U'$ from $G'_1$, $U_1$ and $U_2$ must be adjacent in $G'_{j+1}$. This leads to a contradiction.
\end{proof}

\section{Preliminary Experiment}
\label{app:experiment}
We provide next a preliminary experiment in which we compare the complexities of three algorithms:
(1)~MAP\_VE (Algorithm~\ref{alg:ve-map}) for computing MAP (operates on an SCM);
(2)~RMAP\_VE (Algorithm~\ref{alg:ve-rmap}) for solving unit selection (operates on an objective model);
and (3)~a baseline, bruteforce method for solving unit selection (operates on a twin-model).
We consider the complexities of these algorithms on
random SCMs generated using the method in~\citep{han2022on}. This method generates a random DAG and then ensures that each internal node in the DAG has at least one parent which is a root node by adding additional root nodes (to mimic the structure of SCMs).  
Such DAGs tend to have many root nodes and are particularly difficult for algorithms whose complexity is exponential in the constrained treewidth, like MAP\_VE and RMAP\_VE, as we show later.

Given a random SCM structure, we randomly select different percentages of roots
to be unit variables $\U$. We assume the objective function of~\citep{ijcai/LiP19} given in \equationref{eqn:benefit-ang}. This function has a single outcome variable which we choose randomly from the SCM leaves.
Moreover, as discussed earlier, this function requires only a twin model when constructing the objective model since it does not include evidence variables.
We do not prune the SCMs used by MAP\_VE, the objective models used by RMAP\_VE, or
the twin models used by the bruteforce method (see~\citep[Ch.~6]{DarwicheBook09}) so
the choice of interventional variables do not affect our complexity analysis (no evidence variables in the objective function of \equationref{eqn:benefit-ang}).
The time complexity of MAP\_VE is \(O(n\cdot \exp(w))\),
where \(n\) is the number of SCM nodes and \(w\) is the width of a \(\U\)-constrained elimination order for the SCM.
The time complexity of RMAP\_VE is \(O(n_1\cdot \exp(w_1))\), where $n_1$ is the number of nodes in the objective model and \(w_1\)
is the width of a $\U$-constrained elimination order for the objective model.
The bruteforce method enumerates every instantiation \(\u\)
and returns the one maximizing the objective \(L(\u)\). Its time complexity 
is \(O(n_2\cdot\exp(w_2))\), where $n_2$ is the number of nodes in the twin model used
to evaluate \(L(\u)\) and 
$w_2=|\U|+\!$
the width of an {\em unconstrained} elimination order for the twin model.
Hence, we compare the complexities of these three algorithms by reporting the number of nodes \(n\), \(n_1\), \(n_2\) and the corresponding widths \(w\), \(w_1\), \(w_2\). These are depicted in Table~\ref{tab:width} which also reports the number of SCM roots ($R$) and the percentage of roots used as unit variables ($ur$).

Before we highlight the outcomes of this experiment, we provide some insights into the class of used SCMs and their difficulty. We next characterize a class of problems for which the \(\U\)-constrained treewidth is no smaller than the number of unit variables, $|\U|$. The random SCMs we use in this experiment resemble this class of problems given how they are constructed.

\begin{definition}
\label{def:SCM-class}
    Consider a connected DAG $G$ and a subset $\U$ of its roots. We say that $\U$ are external to $G$
    if the DAG remains connected after removing nodes $\U$ and all their incident edges.
\end{definition}
Markovian SCMs (each root node has a single child) satisfy the above condition.

\begin{lemma}
    Consider a connected DAG $G$, a subset $\U$ of its roots, and its moral graph \(G'\). Let $\S$ be the subset of $\U$ such that for every two nodes $U_1$ and $U_2$ in $\S$, there exists a path between $U_1$ and $U_2$ in $G'$ that does not include any node in $\U\setminus\{U_1, U_2\}$. If $\pi$ is a $\U$-constrained elimination order of $G$ that has width $w$, then we have $w \geq |\S|$.
    \label{lem:constrain-width-roots}
\end{lemma}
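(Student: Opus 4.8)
The plan is to read off the cluster sizes forced by a $\U$-constrained order, in which every non-unit variable is eliminated before any variable of $\U$. First I would eliminate the non-unit block and examine the graph that remains on the (not-yet-eliminated) unit variables. The opening step is to pin down adjacency in this residual graph: by the two-endpoint analogue of \lemmaref{lem:elimination-neighbor}, two unit variables $U_i$ and $U_j$ end up adjacent after all non-unit nodes have been eliminated exactly when $G'$ contains a path between them whose internal nodes avoid $\U$. This is precisely the defining condition of $\S$, so I would conclude that $\S$ induces a clique of size $|\S|$ once the non-unit block has been processed.

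Given this clique, the second step lower-bounds the width. Because elimination never deletes an edge between two surviving nodes, $\S$ stays a clique until its first member is removed. Letting $U^\star$ be the first element of $\S$ eliminated, the remaining $|\S|-1$ members of $\S$ are still present and all adjacent to $U^\star$, so the cluster $\C(U^\star)$ contains all of $\S$ and has size at least $|\S|$. This already yields $w \ge |\S|-1$, and the remaining work is to sharpen it by one.

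The hard part will be extracting that final unit, i.e.\ producing a cluster of size $|\S|+1$ so that $w \ge |\S|$; this is the step I expect to be the main obstacle, and the one that must genuinely use connectedness of $G$ together with the fact that the members of $\S$ are roots. The clique argument only inspects the graph marginalized onto $\U$ and is blind to the non-unit ``glue'' joining the members of $\S$, so the extra unit should instead be supplied by a non-unit node that elimination has made adjacent to every member of $\S$ at once. To formalize this I would fix a connected witness subgraph $W \subseteq G'$ spanning $\S$ through non-unit nodes and, mirroring the path bookkeeping of \lemmaref{lem:elimination-neighbor}, track for each surviving non-unit node of $W$ which members of $\S$ it can reach through already-eliminated non-unit nodes; an induction on the elimination of $W$ should then show that the last non-unit node of $W$ to be removed has reached all of $\S$, so its cluster has size $|\S|+1$. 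The subtlest point, and the one I would scrutinize most, is that this accumulation really needs the connecting non-unit material to form a \emph{single} connected region rather than a family of private pairwise detours: since the members of $\S$ are roots, all of their incident structure enters through non-unit children, and I would invoke connectedness of $G$ at exactly this point to force a common eliminated node adjacent to all of $\S$, thereby excluding the degenerate configuration in which each pair of $\S$-nodes is joined by its own disjoint detour and no single eliminated node is ever simultaneously adjacent to all of $\S$.
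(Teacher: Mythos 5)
Your first two paragraphs reproduce essentially the paper's entire proof: the paper likewise invokes the two-endpoint analogue of \lemmaref{lem:elimination-neighbor} to conclude that $\S$ forms a clique once all non-unit variables have been eliminated, and then reads off a cluster of size $|\S|$. Where you go beyond the paper is in noticing that, under the paper's own definition of width (largest cluster size minus one), a cluster of size $|\S|$ yields only $w \geq |\S|-1$, so the stated bound $w \geq |\S|$ would require a cluster of size $|\S|+1$. You are right that this is the real obstacle: the paper's proof stops at the clique and asserts $w \geq |\S|$, silently committing exactly this off-by-one.

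Unfortunately the repair you sketch cannot succeed, because the lemma as stated is false: connectedness of $G$ does not exclude the ``private pairwise detours'' configuration you were worried about. Take $\U = \{U_1,U_2,U_3\}$ and, for each pair $\{i,j\}$, a single node $X_{ij}$ with parents $U_i$ and $U_j$ (no other nodes or edges). This DAG is connected; moralization makes $\U$ a triangle, so one may take $\S = \U$ with $|\S| = 3$. Yet under the $\U$-constrained order $X_{12},X_{13},X_{23},U_1,U_2,U_3$ every cluster has size at most $3$, so $w = 2 < |\S|$. No non-unit node is ever simultaneously adjacent to all three of $U_1,U_2,U_3$, which is precisely the degenerate case you hoped connectedness would rule out. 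The correct general statement is $w \geq |\S|-1$, which your clique argument (and the paper's) already delivers. Your idea of producing a non-unit node whose cluster contains all of $\S$ does go through under the stronger hypothesis of \corollaryref{cor:SCM-constrain-width-roots}, where $\U$ is \emph{external}: there the non-unit nodes remain connected after removing $\U$, every $U \in \U$ has a non-unit child, so by \lemmaref{lem:elimination-neighbor} the last non-unit node to be eliminated is adjacent to all of $\U$, giving a cluster of size $|\U|+1$ and hence $w \geq |\U|$. So the corollary can be salvaged by your route even though the lemma itself must be weakened by one.
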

\begin{proof}
    By Lemma~\ref{lem:elimination-neighbor}, every two nodes $U_1$ and $U_2$ in $\S$ will be adjacent after all nodes other than $\U$ are eliminated from $G'$. Thus, nodes in $\S$ will form a clique after all nodes other than $\U$ are eliminated. This leads to a cluster of size $|\S|$ during the elimination process, so $|\S|$ is a lower bound for the width of any \(\U\)-constrained elimination order. 
\end{proof}

Our main insight is stated in the following corollary which shows that MAP\_VE and RMAP\_VE
must be exponential in the number of unit variables for the class of SCMs (and unit variables) identified by Definition~\ref{def:SCM-class}. The baseline method can be significantly worse since it is exponential in the number of unit variables plus the unconstrained treewidth of the twin model.

\begin{corollary}
    Consider a connected SCM $G$, a subset $\U$ of its roots, and a $\U$-constrained elimination order $\pi$ with width $w$. If $\U$ are external to $G$, then $w \geq |\U|$.
    \label{cor:SCM-constrain-width-roots}
\end{corollary}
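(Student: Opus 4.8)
The plan is to derive the corollary as an immediate consequence of \lemmaref{lem:constrain-width-roots} by showing that, when $\U$ is external to $G$, the \emph{entire} set $\U$ qualifies as the set $\S$ appearing in that lemma. Recall that \lemmaref{lem:constrain-width-roots} guarantees $w \geq |\S|$ whenever $\S \subseteq \U$ has the property that every pair of distinct nodes $U_1, U_2 \in \S$ is joined by a path in the moral graph of $G$ that avoids all nodes in $\U \setminus \{U_1, U_2\}$. Hence it suffices to verify this path-connectivity property for the full set $\S = \U$, after which the bound $w \geq |\U|$ follows directly.

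First I would dispose of the degenerate cases: if $|\U| \leq 1$ the claim is trivial, so assume $|\U| \geq 2$. Since the nodes $\U$ are roots and roots are pairwise non-adjacent in $G$, a connected DAG containing two distinct roots must contain at least one non-root node; in particular every root $U \in \U$ has a child, and that child necessarily lies outside $\U$, since a child has a parent and is therefore not a root. Fix distinct nodes $U_1, U_2 \in \U$ and choose children $c_1, c_2 \notin \U$ with directed edges $U_1 \to c_1$ and $U_2 \to c_2$ in $G$; both edges survive as undirected edges in the moral graph of $G$.

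The key step is to connect $c_1$ to $c_2$ while avoiding $\U$. Because $\U$ is external to $G$, the graph $G \setminus \U$ obtained by deleting $\U$ together with its incident edges is connected, so there is an undirected path $(c_1, \ldots, c_2)$ inside $G \setminus \U$. Every edge of this path is an edge of $G$ among nodes outside $\U$, and therefore also an edge of the moral graph; moreover the path uses no node of $\U$. Splicing the two root edges onto its ends yields a path $(U_1, c_1, \ldots, c_2, U_2)$ in the moral graph whose only nodes drawn from $\U$ are its endpoints $U_1$ and $U_2$. The case $c_1 = c_2$ causes no difficulty: the middle segment collapses to a single node, and in any event shared children render $U_1$ and $U_2$ directly adjacent in the moral graph by moralization. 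This establishes the required property for $\S = \U$.

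The main thing to get right is the translation of connectivity in $G \setminus \U$ into a valid moral-graph path that avoids the remaining unit variables, together with confirming that the two root-to-child edges used at the ends introduce no additional node of $\U$; both points reduce to the single observation that children of roots are never themselves roots. With the property verified, \lemmaref{lem:constrain-width-roots} applied with $\S = \U$ yields $w \geq |\U|$, completing the proof. I do not expect any genuine obstacle here, as the corollary is essentially the instance of \lemmaref{lem:constrain-width-roots} in which the externality hypothesis forces $\S$ to be all of $\U$.
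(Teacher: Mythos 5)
Your proposal is correct and follows essentially the same route as the paper's own proof: pick children $X_1, X_2 \notin \U$ of $U_1, U_2$, use externality to obtain a $\U$-avoiding path between them in the moral graph, splice on the root-to-child edges, and invoke Lemma~\ref{lem:constrain-width-roots} with $\S = \U$. Your version is merely a bit more explicit about the degenerate cases and about why a path in $G \setminus \U$ survives in the moral graph, which the paper leaves implicit.
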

\begin{proof}
    Consider any two nodes $U_1$ and $U_2$ in $\U$. Suppose that $X_1$ is a child of $U_1$ and $X_2$ is a child of $U_2$. Since $\U$ are external to $G$, there exists a path $(X_1, \ldots, X_2)$ in the moral graph of $G$ that does not include any nodes in $\U$. Thus, there exists a path $(U_1, X_1, \ldots, X_2, U_2)$ that do not include any nodes in $\U \setminus \{U_1, U_2\}$. By Lemma ~\ref{lem:constrain-width-roots}, this implies $w \geq |\U|$ since $\S = \U$.
\end{proof}

We can now highlight the patterns in
Table~\ref{tab:width}. The complexities of MAP\_VE and RMAP\_VE are relatively close with the latter being more expensive than the former. 
Moreover, the gap between them narrows as the number of SCM variables ($n$) and the number of unit variables ($ur$) increase.
Note that according to Theorem~\ref{thm:objective-constrain-width}, $w_1/w \leq 2$ yet Table~\ref{tab:width} shows that this ratio can be significantly smaller than $2$.
Finally, the bruteforce method is significantly worse than RMAP\_VE and the gap between the two grows  as the number of SCM variables ($n$) and unit variables ($ur$) increase.

\begin{table}[tb]
\small
\centering
    \begin{tabular}{|c|c|c||c|c|c|c|c|c|c|c|c|c|c|c|}
        \hline
        \multicolumn{3}{|c||}{$ur$}
         & \multicolumn{4}{c|}{\text{20\%}} & \multicolumn{4}{c|}{\text{40\%}} & \multicolumn{4}{c|}{\text{60\%}}  \\
        \cline{1-15}
        $n$ &  $n_2$  & $R$  & $n_1$ & $w$ & $w_1$ & $w_2$  & $n_1$ & $w$ & $w_1$ & $w_2$  & $n_1$ & $w$ &$w_1$ & $w_2$  \\
        \hline
        10 & 14 & 6 & 
        52 & 5.5 & 7.2 & 7.3 
        & 49 & 5.5 & 7.4 & 8.3 
        & 46 & 5.5 & 7.5 & 9.3 \\
        \hline
        15 & 21 & 9 
        & 82 & 7.4 & 10.0 & 10.6 
        & 76 & 7.4 & 10.2 & 12.6 
        & 70 & 7.5 & 11.0 & 14.6 \\
        \hline
        20 & 30 & 12 
        & 116 & 10.1 & 14.0 & 16.0 
        & 110 & 10.1 & 14.5 & 18.0 
        & 101 & 10.1 & 15.4 & 21.0  \\
        \hline
        25 & 37 & 15 
        & 140 & 11.0 & 16.8 & 19.7 
        & 131 & 11.0 & 17.4 & 22.7 
        & 122 & 11.2 & 18.4 & 25.7  \\
        \hline
        30 & 43 & 17
        & 163 & 11.3 & 18.8 & 21.6
        & 154 & 11.4 & 19.2 & 24.6
        & 142 & 11.8 & 20.7 & 28.6 \\
        \hline
        35 & 50 & 19 & 
         190 & 12.4 & 21.4 & 24.2 &
         178 & 12.4 & 21.9 & 28.2 &
         166 & 12.8 & 23.5 & 32.2  \\
        \hline
        40 & 57 & 22 
        & 218 & 13.3 & 24.3 & 27.8
        & 206 & 13.6 & 24.9 & 31.8
        & 191 & 14.3 & 26.6 & 36.8 \\
        \hline
        45 & 64 & 24 
        & 246 & 14.3 & 26.6 & 29.8 
        & 231 & 14.4 & 27.2 & 34.8 
        & 216 & 15.6 & 28.9 & 39.8  \\
        \hline
\end{tabular}
\end{table}

\begin{table}
\small
\centering
\begin{tabular}{|c|c|c||c|c|c|c|c|c|c|c|}
    \hline 
    \multicolumn{3}{|c||}{$ur$} & \multicolumn{4}{c|}{\text{$80\%$}} & \multicolumn{4}{c|}{\text{$100\%$}}   \\
    \cline{1-11}
    $n$ & $n_2$ & $R$ & $n_1$ & $w$ & $w_1$ & $w_2$ & $n_1$ & $w$ & $w_1$ & $w_2$  \\
     \hline
        10 & 14 & 6 
        & 43 & 5.5 & 7.7 & 10.3 
        & 37 & 6.3 & 8.3 & 12.3 \\
    \hline
        15 & 21 & 9  
        & 64 & 8.2 & 11.8 & 16.6 
        & 58 & 9.6 & 12.6 & 18.6 \\
    \hline
        20 & 30 & 12
        & 95 & 10.5 & 15.6 & 23.0 
        & 86 & 12.8 & 16.4 & 26.0 \\
    \hline
        25 & 37 & 15
        & 113 & 12.8 & 18.8 & 28.7 
        & 104 & 15.9 & 19.9 & 31.7 \\
    \hline
        30 & 43 & 17 
        & 133 & 13.8 & 21.2 & 31.6
        & 121 & 18.0 & 21.6 & 35.6  \\
    \hline
        35 & 50 & 19
        & 154 & 15.6 & 24.0 & 36.2 
        & 142 & 19.6 & 23.6 & 40.2 \\
    \hline 
        40 & 57 & 22
        & 179 & 17.6 & 27.0 & 40.8
        & 164 & 23.0 & 26.6 & 45.8 \\
    \hline
        45 & 64 & 24
        & 201 & 20.1 & 30.6 & 44.8 
        & 186 & 25.6 & 29.2 & 49.8 \\
    \hline
\end{tabular}
\caption{Comparing 
the complexities of MAP\_VE for solving MAP ($n\exp(w)$),
RMAP\_VE for solving unit selection ($n_1\exp(w_1)$),
and the bruteforce method for solving unit selection ($n_2\exp(w_2)$). Each data point is an average over \(25\) runs. All elimination orders are computed using the minfill heuristic \citep{kjaerulff1990triangulation}.}
\label{tab:width}
\end{table}

\begin{figure}[tb]
    \centering
\includegraphics[width=90mm]{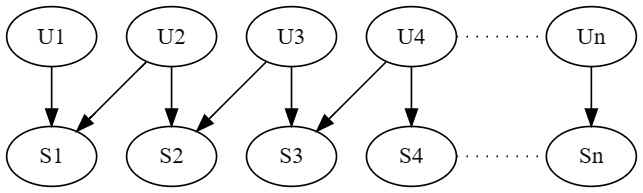}
    \caption{The unit variables are \(\U = U_1, \ldots, U_n\). The \(\U\)-constrained treewidth is $3$.}
\label{fig:bounded cw}
\end{figure}

We close this discussion by identifying a class of problems with an unbounded number of unit variables \(\U\) yet a bounded \(\U\)-constrained treewidth.
This class is depicted in Figure~\ref{fig:bounded cw}.
The $\U$-constrained treewidth is $3$, which can be shown using the $\U$-constrained elimination order
$S_1, \ldots,$
$ S_n, \ldots, U_1, \ldots, U_n.$
This is a class of problems for which unit selection using RMAP\_VE is tractable even when the number of unit variables is unbounded, assuming one uses a suitable objective function (e.g., the benefit function of~\citep{ijcai/LiP19} given in \equationref{eqn:benefit-ang}).

\end{document}